\newif\ifarXiv
\pgfplotsset{
  compat = 1.17
}
\definecolor{bleu}     {RGB}{ 49,140,231}
\definecolor{cardinal} {RGB}{196, 30, 58}
\definecolor{emerald}  {RGB}{ 80,200,120}
\definecolor{lightgrey}{RGB}{230,230,230}
\Crefname{page}{p.}{p.}
\newcommand{\data}[1]{\textsc{#1}\xspace}
\newcommand{\method}{TARDIS\xspace}
\newcommand{\longmethod}{\ul{T}opological \ul{A}lgorithm for \ul{R}obust \ul{DI}scovery of \ul{S}ingularities\xspace}
\pgfplotsset{
  cloud/.style = {%
    axis lines = none,
    width      = 5cm,
    mark size  = 1pt,
    colormap/Reds,
    %
    scatter,
    scatter/use mapped color = {%
      draw = mapped color,
      fill = mapped color
    },
    view     = {-45}{60},
    z buffer = sort,
  },%
}
\newtheorem{corollary}{Corollary}
\newcommand{\diagram} {\mathcal{D}\xspace}
\newcommand{\naturals}{\mathds{N}\xspace}
\newcommand{\reals}   {\mathds{R}\xspace}
\newcommand{\X}       {\mathds{X}\xspace}
\DeclareMathOperator{\Euclidicity}{\mathfrak{E}}
\DeclareMathOperator{\expectation}{E}
\DeclareMathOperator{\EucB}       {\mathds{E}B}
\DeclareMathOperator{\filt}       {\mathcal{F}}
\DeclareMathOperator{\Dist}       {D}
\DeclareMathOperator{\dist}       {d}
\DeclareMathOperator{\distB}      {\dist_{B}}
\DeclareMathOperator{\distH}      {\dist_{H}}
\DeclareMathOperator{\Hom}        {H}
\DeclareMathOperator{\Lk}         {Lk}
\DeclareMathOperator{\PH}         {PH}
\DeclareMathOperator{\PLH}        {PLH}
\DeclareMathOperator{\St}         {St}
\DeclareMathOperator{\vietoris}   {\mathcal{V}}
\begin{document}

\twocolumn[
  \icmltitle{Topological Singularity Detection at Multiple Scales}
  
  \icmlsetsymbol{equal}{*}

  \begin{icmlauthorlist}
    \icmlauthor{Julius von Rohrscheidt}{HM,TUM}
    \icmlauthor{Bastian Rieck}{HM,TUM}
  \end{icmlauthorlist}

  \icmlaffiliation{HM}{Helmholtz Munich}
  \icmlaffiliation{TUM}{Technical University of Munich}

  \icmlcorrespondingauthor{Bastian Rieck}{bastian.rieck@helmholtz-munich.de}

  \vskip 0.3in
]

\printAffiliationsAndNotice{}

\begin{abstract}
  The manifold hypothesis, which assumes that data lies on or close to an unknown
  manifold of low intrinsic dimension, is a staple of modern machine
  learning research.
  However, recent work has shown that
  real-world data exhibits distinct non-manifold
  structures, i.e.\ \emph{singularities}, that can lead to
  erroneous findings.
  Detecting such singularities is therefore crucial as a precursor to
  interpolation and inference tasks.
  We address this issue by developing a topological framework that
  \begin{inparaenum}[(i)]
    \item quantifies the local intrinsic dimension, and
    \item yields a \emph{Euclidicity} score for assessing the
      `manifoldness' of a point along multiple scales.
  \end{inparaenum}
  %
  %
  Our approach identifies singularities of complex spaces, while also
  capturing singular structures and local geometric complexity in
  image data.
\end{abstract}

\section{Introduction}

The ever-increasing amount and complexity of real-world data necessitate
the development of new methods to extract less complex but still
\emph{meaningful} representations of the underlying data.
While numerous methods for approaching this representation learning
problem exist, they all share a common assumption: the underlying data
is supposed to be close to a manifold with small intrinsic dimension,
i.e.\ while the input data may have a large ambient dimension~$N$, there
is an $n$-dimensional manifold with $n \ll N$ that best describes the
data.
For some data sets, such as natural images, this \emph{manifold
hypothesis} is appropriate~\citep{Carlsson09a}.
However, recent research shows evidence that the manifold hypothesis
does not necessarily hold for complex data sets~\citep{brown2023verifying},
and that manifold learning techniques tend to fail for non-manifold
data~\citep{Rieck15b, scoccola2022fiberwise}.
These failures are often the result of \emph{singularities}, i.e.\
regions of a space that violate the properties of a manifold~(see
\cref{sec:Background} for details).
For example, the `pinched torus,' an object obtained by compressing
a random region of the torus~(specifically, a meridian) to a single point, fails to
satisfy the manifold hypothesis at the `pinch point:' this point, unlike
all other points, does \emph{not} have a neighbourhood
homeomorphic to $\reals^2$~(see \cref{fig:Overview} for an illustration).
Since singularities---in contrast to outliers arising from incorrect
labels, for example---often carry relevant
information~\citep{Jakubowski20}, new tools for detecting and
handling non-manifold regions in spaces are needed.

\begin{figure}[tbp]
  \centering
  \includegraphics{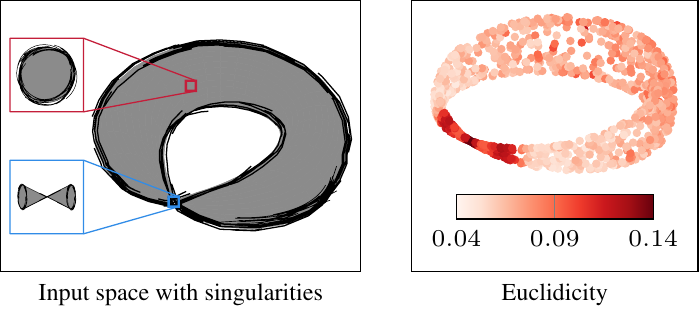}
  \caption{%
    Overview of our method. Given a space with \emph{singularities}, our
    \emph{Euclidicity} score measures the deviation from a space to
    a Euclidean model space.
    Here, \emph{Euclidicity} highlights the singularity at the `pinch point.'
    %
    %
    Please refer to \cref{sec:Methods} for more details.
  }
  \label{fig:Overview}
\end{figure}

\paragraph{Our contributions.}
We develop an \emph{unsupervised representation learning framework for
detecting singular regions in point cloud data}.
Our framework is agnostic with respect to geometric or stochastic
properties of the underlying data and only requires a notion of
intrinsic dimension of neighbourhoods, which, crucially, is allowed to
vary across different points.
Our approach is based on a novel formulation of persistent local
homology~(PLH), a method for assessing the shape of
neighbourhoods at multiple scales of locality.
We use PLH to
\begin{inparaenum}[(i)]
  \item estimate the intrinsic dimension of a point locally, and
  \item define \emph{Euclidicity}, a novel quantity that measures the
    deviation of a point from being Euclidean.
\end{inparaenum}
We also provide theoretical guarantees on the approximation quality for
certain classes of spaces, including manifolds.
\emph{Euclidicity} yields a complementary perspective on data,
highlighting regions where the manifold hypothesis breaks down. We show
the utility of this perspective experimentally on several data sets,
ranging from spaces with known singularities to high-dimensional image
data sets.

\section{Mathematical Background}\label{sec:Background}

We first provide an overview of persistent homology, 
and stratified spaces, as well as their relation to \emph{local
homology}. The former concept constitutes a generic framework for
assessing complex data at multiple scales by measuring topological
characteristics such as `holes' and `voids'~\citep{Edelsbrunner10}, while the latter serves as a general setting to describe singularities, in
which our framework admits advantageous properties.

\paragraph{Persistent homology.}

Persistent homology is a method for computing topological features at
different scales, capturing an intrinsic notion of relevance in terms of
spatial scale parameters.
Given
a finite metric space $(\X,\dist)$, the \emph{Vietoris--Rips complex}
at step~$t$ is defined as the abstract simplicial complex
$\vietoris(\X,t)$, in which an abstract $k$-simplex $(x_0, \dotsc, x_k)$
of points in $\X$ is spanned if and only if $\dist(x_i,x_j) \leq t$ for
all $0 \leq i \leq j \leq k$.\footnote{%
  For readers familiar with persistent homology, we depart from the
  usual convention of using~$\epsilon$ as the threshold parameter since
  we use  it for the scale of our
  persistent local homology calculations.
}
For $t_1 \leq t_2$, the inclusions
$\vietoris(\X,t_1) \hookrightarrow \vietoris(\X,t_2)$ yield
a filtration, i.e.\ a sequence of nested simplicial complexes, which we
denote by $\vietoris(\X,\bullet)$. Applying the $i$th homology functor
to this collection of spaces and inclusions between them induces maps on
the homology level $f_i^{t_1,t_2}\colon \Hom_i(\vietoris(\X,t_1))
\rightarrow \Hom_i(\vietoris(\X,t_2))$ for any $t_1 \leq t_2$. The $i$th
\emph{persistent homology (PH)} of $\X$ with respect to the
Vietoris-Rips construction is defined to be the collection of all these
$i$th homology groups, together with the respective induced maps between
them, and denoted by $\PH_i(\vietoris(\X, \bullet))$. $\PH$ can therefore be
viewed as a tool that keeps track of topological features such as holes and
voids on multiple scales.
For a more comprehensive introduction to $\PH$ in the
context of machine learning, see \citet{Hensel21}.
The so-called `creation' and `destruction' times of these features are
summarised in a \emph{persistence diagram} $\diagram \subset \reals
\times \reals \cup \{\infty\}$, where any point $(b,d) \in \diagram$
corresponds to a homology class that arises at filtration
step~$b$, and lasts until filtration step~$d$. The difference $|d-b|$ is
referred to as the lifetime or eponymous \emph{persistence} of this
homology class.
There are several distance measures for comparing persistence
diagrams, one of them being the \emph{bottleneck distance}, defined as
$\distB(\diagram, \diagram') := \inf_{\gamma} \sup_{x \in \diagram} \| x-\gamma(x) \|_{\infty}$,
where $\gamma$ ranges over all bijections between $\diagram$ and
$\diagram'$.

\paragraph{Stratified spaces.}\label{par:stratified}
%
\begin{figure}
  \centering
  \subcaptionbox{\label{sfig:Figure eight curve}}{
    \begin{tikzpicture}
      \begin{axis}[
        axis lines        = none,
        trig format plots = rad,
        height            = 2.5cm,
        width             = 4.0cm,
        clip              = false,
      ]
        \filldraw[%
          draw = cardinal,
          fill = cardinal!25,
        ] (1, 0) circle (0.20cm);

        \filldraw[draw = cardinal, fill = white] (1, 0) circle (0.10cm);

        \filldraw[%
          draw = bleu,
          fill = bleu!25,
        ] (0, 0) circle (0.20cm);

        \filldraw[draw = bleu, fill = white] (0, 0) circle (0.10cm);

        \addplot[domain=0:2*pi, samples=200] (
          {sin(x)},
          {sin(x) * cos(x)}
        );

        \filldraw (1, 0) circle (1pt);
        \node[anchor = east, xshift = -0.1cm] at (1, 0) {$x$};

        \filldraw (0, 0) circle (1pt);
        \node[anchor = north, yshift = -0.1cm] at (0, 0) {$y$};

      \end{axis}
    \end{tikzpicture}
  }%
  \subcaptionbox{\label{sfig:Regular neighbourhood}}{%
    \begin{tikzpicture}
      \filldraw[draw = cardinal, fill = cardinal!25] (0.5, 0.0) circle (0.50cm);
      \filldraw[draw = cardinal, fill = white      ] (0.5, 0.0) circle (0.25cm);

      \draw              (0.50, -0.50) -- (0.50,  0.50);
      \draw[ultra thick] (0.50,  0.25) -- (0.50,  0.50);
      \draw[ultra thick] (0.50, -0.25) -- (0.50, -0.50);

      \filldraw (0.50, 0.00) circle (1pt);
    \end{tikzpicture}
  }%
  \subcaptionbox{\label{sfig:Singular neighbourhod}}{%
    \begin{tikzpicture}
      \filldraw[draw = bleu, fill = bleu!25] (0.5, 0.0) circle (0.50cm);

      \coordinate (UL) at ($(135:0.50cm) +(0.50, 0.00)$);
      \coordinate (UR) at ($( 45:0.50cm) +(0.50, 0.00)$);
      \coordinate (LR) at ($(315:0.50cm) +(0.50, 0.00)$);
      \coordinate (LL) at ($(225:0.50cm) +(0.50, 0.00)$);

      \draw[ultra thick] (UL) -- (LR);
      \draw[ultra thick] (UR) -- (LL);

      \filldraw[draw = bleu, fill = white  ] (0.5, 0.0) circle (0.25cm);

      \draw (UL) -- (LR);
      \draw (UR) -- (LL);

      \filldraw (0.50, 0.00) circle (1pt);
    \end{tikzpicture}
  }
  \caption{%
    \subref{sfig:Figure eight curve}: Non-manifold space.
    \subref{sfig:Regular neighbourhood}: Annulus around a regular
    point~$x$.
    \subref{sfig:Singular neighbourhod}: Annulus around a singular
    point. The neighbourhood around~$y$ is different from all others.
  }
  \label{fig:Singularities}
\end{figure}

Manifolds are widely studied and particularly well-behaved topological
spaces as they locally resemble Euclidean space near any point. However,
spaces that arise naturally often violate this local homogeneity
condition~(see \cref{fig:Singularities} for an example).
\emph{Stratified spaces} generalise the concept of a manifold to address
\emph{singular spaces}.
Being intrinsically capable of describing a wider class of spaces,
we argue that stratified spaces are the right tool to analyse real-world
data.
In particular, the intrinsic dimension of points is allowed to vary in
this setting, thus leading to high flexibility in comparison to
manifolds.
Subsequently, we define stratified spaces in the setting of
simplicial complexes. A stratified simplicial complex
of dimension $0$ is a finite set of points with the discrete topology.
A stratified simplicial complex of dimension $n$ is an $n$-dimensional
simplicial complex $X$, together with a filtration of closed subcomplexes
$X=X_n \supset X_{n-1} \supset X_{n-2} \supset \dots \supset X_{-1}
= \emptyset$ such that $X_i \setminus X_{i-1}$ is an $i$-dimensional manifold for
all $i$, and such that every point $x \in X$ possesses a distinguished
local neighbourhood 
$U \cong \reals^k \times c^{\circ} L$
in $X$, where $L$ is a compact stratified simplicial complex of
dimension $n-k-1$ and $c^{\circ}$ refers to the open cone
construction~(see \cref{app:Notation}).

\fcolorbox{black}{lightgrey}{%
  \parbox{\dimexpr\linewidth-2\fboxsep-2\fboxrule}{%
    If there exists a neighbourhood $U$ of $x$ that is homeomorphic to
    $\reals^n$, we say that $x$ is a \emph{regular} point, otherwise we
    call $x$ a \emph{singularity}.
  }
}

If $X$ is a manifold, then independently of the point
under consideration, $L$ is given by a sphere since for a manifold,
\emph{any} point is regular by definition. 
By contrast, a small neighbourhood of the pinch point in a pinched
torus can be described as the open cone on the disjoint union of two
circles, and therefore the link $L$ is given by $S^1 \sqcup S^1$ in this
case, whose homology is different from homology of the link of any other
point in the pinched torus~(which is just given by a circle for every
other point). The insets in \cref{fig:Overview}~(left) depict
the different neighbourhoods.

\paragraph{Local homology.}
%
We now formalise the idea of tracking the homology of a link of a point,
following the previous description.
Intuitively, the \emph{local homology} of a point is obtained by the
homology of an infinitesimal small punctured neighbourhood around the
point~(see \cref{app:Proofs} for a more rigorous description).
One can show that the local homology of a singularity usually differs from
the local homology of a regular point. In particular, points that are of
different intrinsic dimensions with respect to the stratified
simplicial complex can be distinguished by local homology; this includes
the disjoint union of manifolds of possibly varying dimensions. This
observation motivates and justifies using local homology for detecting
neighbourhoods of singular points, and serves as the primary motivation
for our novel \emph{Euclidicity} measure in \cref{sec:Euclidicity}.

\section{Related Work}

While manifold learning is concerned with the development of algorithms
that extract geometric information under the assumption that the
given data lie on a manifold, recent work starts to question this
assumption. \citet{brown2023verifying}, for instance, introduce
the \emph{union of manifolds hypothesis}, which augments the manifold
hypothesis to spaces that can be modelled as~(disjoint) unions of
manifolds. Intrinsic dimension is thus allowed to vary across connected
components of such a space. However, singularities are excluded under
this assumption, whereas our method detects the correct intrinsic dimension
for large classes of singular spaces.
We assume a fundamental `singularity-centric' perspective in this paper
and argue that a multi-scale analysis of the local geometry and
topology of data is necessary.
In this context, methods from topological data analysis have started
attracting attention in machine learning~\citep{Hensel21}.
This is particularly due to \emph{persistent homology}, which captures
geometrical and topological properties of the underlying data set on
different scales~\citep{Bubenik20a, Turkes22a}.
The idea of tracking objects on multiple scales can at
least be traced back to~\citep{koenderink1986dynamic,lindeberg1994scale},
with scale space theory playing an eminent role in computer vision.
However, the utility of persistent homology in the context of geometric
singularities in data only came up more recently, since early work in
persistent homology focuses predominantly on the simplification of
functions on manifold domains~\citep{Edelsbrunner02}.
While some research already discusses the utility of persistent homology
for general unsupervised data analysis workflows~\citep{Chazal13a,
Rieck17a, Rieck16a, Rieck15b}, it focuses more on global structures,
whereas singularities are inherently local. A notable exception is
a work by \citet{Wang11a}, which analyses local branching and global
circular features.
We give a brief overview of methods in the emerging field of
topology-driven singularity detection, outlining the differences to our
approach below.

\paragraph{Topology-driven singularity detection.}
Several works assume a local perspective on homology to detect
information about the intrinsic dimensionality of the data or the
presence of certain singularities.
\citet{Rieck20b} use pre-defined
stratifications and \emph{persistent
intersection homology}, a technique developed by \citet{Bendich11},
whereas \citet{fasy2016exploring} and
\citet{bendich2008analyzing} both develop persistent
variants of local homology.
By contrast, \citet{Stolz20a} approximate local
homology as the absolute homology of a small annulus of a given
neighbourhood, resulting in an algorithm for geometric anomaly
detection~(which requires knowing the intrinsic dimension of the data
set).
Moreover, \citet{bendich2007inferring} employ persistence vineyards,
i.e.\ continuous families of  persistence diagrams, to assess the local
homology of a point in a stratified space, whereas
\citet{dey2014dimension} use local homology to estimate the~(global)
intrinsic dimension of hidden, possibly noisy manifolds.

\paragraph{Key differences to existing approaches.}
%
While existing methods overall underscore the relevance of a local
perspective, as well as the use of notions such as \emph{stratified
spaces}, our approach differs from them in essential components.
In comparison to all aforementioned contributions, we capture
additional local geometric information: \ul{we consider multiple
scales of locality in a persistent framework for local homology.}
Concerning the overall construction, \citet{Stolz20a} is the closest to
our method. However, the authors assume that the intrinsic dimension is
known and the proposed algorithm uses one global scale, whereas our
approach
\begin{inparaenum}[(i)]
  \item operates in a multi-scale setting,
  \item provides local estimates of intrinsic dimensionality of the data
    space, and
  \item incorporates model spaces that serve as a comparison.
\end{inparaenum}
We can thus measure the deviation from an idealised manifold,
requiring fewer assumptions on the structure of the input
data~(see \cref{sec:Euclidicity versus single-parameter} for
a comparison).

\section{Methods}\label{sec:Methods}

Our framework \method~(\longmethod) consists of two parts:
\begin{inparaenum}[(i)]
  \item a method to calculate a local intrinsic dimension of the data, and
  \item \emph{Euclidicity}, a measure for assessing the multi-scale
    deviation from a Euclidean space.
\end{inparaenum}
\method is based on the assumption that the intrinsic dimension of data
may not be constant across the data set, and is thus best
described by \emph{local measurements}, i.e.\ measurements in 
a small neighbourhood of a given point.
Since there is no canonical choice for the magnitude of such
a neighbourhood, \method analyses data on multiple scales.
Our main idea involves constructing a collection of local~(punctured)
neighbourhoods for varying locality scales, and calculating
their topological features.
This procedure allows us to approximate local topological
features~(specifically, local homology) of a given point, which we use
to measure the intrinsic dimensionality of a space. Moreover, by
calculating the distance to Euclidean model spaces, we are capable of
detecting singularities in a large range of input data sets.
For the subsequent description of \method, we only assume that data can
be represented as a finite metric space~(i.e.\ as a point cloud).

\subsection{Persistent Intrinsic Dimension}\label{sec:Persistent intrinsic dimension}

For a finite metric space $(\X,\dist)$ and $x \in \X$, let $B_r^{s}(x)
:= \{ y \in \X \mid r \leq \dist(x,y) \leq s \}$ denote the \emph{intrinsic
annulus} of $x$ in $\X$ with respect to radii~$r$ and~$s$.
Moreover, let $\filt$ denote a procedure that takes as input a finite
metric space and outputs an ascending filtration of topological
spaces---such as a Vietoris--Rips filtration.
By applying $\filt$ to the intrinsic annulus of $x$, we obtain
a tri-filtration $(\filt(B_r^{s}(x),t))_{r,s,t}$, where $t$ corresponds
to the respective filtration step that is determined by $\filt$.
Note that this tri-filtration is covariant in $s$ and $t$, but
contravariant in $r$; we denote it by
$\filt(B_{\bullet}^{\bullet}(x),\bullet)$.
Applying $i$th homology to this filtration yields a tri-parameter
persistent module that we call $i$th \textbf{persistent local homology
(PLH)} of $x$, denoted by
$\PLH_i(x; \filt) := \PH_i(\filt(B_{\bullet}^{\bullet}(x),\bullet))$.
\cref{fig:Annulus illustration} illustrates how to obtain an annulus
from a data set and depicts one step of the filtration process.
To the best of our knowledge, this is the first time that PLH is
considered as a multi-parameter persistence module. 
Since the Vietoris--Rips filtration is the pre-eminent filtration in
TDA, we will use $\PLH_i(x) := \PLH_i(x; \vietoris)$ as an
abbreviation.
Before developing ways to detect singularities, we first show that
our PLH formulation enjoys stability properties similar to the
seminal stability theorem in persistent
homology~\citep{Cohen-Steiner07}, making it robust to small parameter
changes.

\begin{restatable}{theorem}{thmstability}
  Given a finite metric space $\X$ and $x \in \X$, let $B_r^{s}(x)$ and
  $B_{r'}^{s'}(x)$ denote two intrinsic annuli with
  $|r-r'| \leq \epsilon_1$ and $|s-s'| \leq \epsilon_2$.
  Furthermore, let $\diagram, \diagram'$ denote the persistence diagrams
  corresponding to $\PH_i(\vietoris(B_{r}^{s}(x),\bullet))$ and
  $\PH_i(\vietoris(B_{r'}^{s'}(x),\bullet))$. Then
  $\frac{1}{2} \distB(\diagram,\diagram') \leq \max \{ \epsilon_1, \epsilon_2 \}$.
  \label{thm:Stability}
\end{restatable}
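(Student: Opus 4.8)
The plan is to reduce the statement to the bottleneck stability theorem for Vietoris--Rips filtrations~\citep{Cohen-Steiner07}, which bounds $\distB(\diagram, \diagram')$ by twice the Gromov--Hausdorff distance between the two underlying finite metric spaces. Since both annuli $B_r^s(x)$ and $B_{r'}^{s'}(x)$ inherit the metric of the common ambient space $\X$, their Gromov--Hausdorff distance is in turn dominated by their Hausdorff distance $\distH(B_r^s(x), B_{r'}^{s'}(x))$ computed inside $\X$. Chaining these estimates gives $\distB(\diagram, \diagram') \leq 2\,\distH(B_r^s(x), B_{r'}^{s'}(x))$, so after dividing by two the theorem reduces to the single geometric inequality $\distH(B_r^s(x), B_{r'}^{s'}(x)) \leq \max\{\epsilon_1, \epsilon_2\}$.

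To estimate this Hausdorff distance, I would first isolate the region on which the two annuli coincide, namely $\{ y \in \X \mid \max(r,r') \leq \dist(x,y) \leq \min(s,s') \}$; points here lie in both annuli and so contribute nothing. Any point belonging to only one of the annuli must then fall into one of two thin shells: an \emph{inner} shell, where $\dist(x,y)$ lies between $r$ and $r'$, or an \emph{outer} shell, where $\dist(x,y)$ lies between $s$ and $s'$. By hypothesis the inner shell has radial width at most $|r-r'| \leq \epsilon_1$ and the outer shell radial width at most $|s-s'| \leq \epsilon_2$. The goal is then to show that every such shell point admits a witness in the other annulus within distance $\max\{\epsilon_1,\epsilon_2\}$, and symmetrically, which yields $\distH(B_r^s(x), B_{r'}^{s'}(x)) \leq \max\{\epsilon_1, \epsilon_2\}$ and hence the claim.

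The main obstacle is precisely this matching of shell points. In a general finite metric space, shifting a radius threshold by a small amount can delete or insert an \emph{isolated} sample whose nearest surviving neighbour is far away; such a point corresponds to a long-lived $\Hom_0$ class and would inflate both the Hausdorff and the bottleneck distance beyond $\max\{\epsilon_1,\epsilon_2\}$. The real content of the proof therefore lies in controlling the shells, and I would do so either by constructing an explicit Gromov--Hausdorff correspondence that pairs each shell point with a point at the appropriate radius---charging the distortion to the radial displacement $\max\{\epsilon_1,\epsilon_2\}$ rather than to a genuinely nearby sample---or by invoking a mild sampling/regularity hypothesis on $\X$ that guarantees the shells are well populated. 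Once the Hausdorff (or Gromov--Hausdorff) estimate of $\max\{\epsilon_1,\epsilon_2\}$ is secured, the conclusion follows immediately from the stability theorem, the remainder being routine bookkeeping.
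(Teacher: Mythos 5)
Your overall route is exactly the paper's: bound the bottleneck distance of the two Vietoris--Rips diagrams by (twice) the Hausdorff/Gromov--Hausdorff distance of the two annuli via the geometric stability theorem, and then reduce everything to the single estimate $\distH\bigl(B_r^s(x), B_{r'}^{s'}(x)\bigr) \leq \max\{\epsilon_1,\epsilon_2\}$. The paper's proof is precisely this chain, except that it disposes of the Hausdorff estimate in one line (``By assumption, $B_r^{s}(x) \subset B_{r'}^{s'}(x)_\epsilon$ and $B_{r'}^{s'}(x) \subset B_r^{s}(x)_\epsilon$'') and then invokes the geometric stability theorem of Chazal et al.\ to get $\frac{1}{2}\distB(\diagram,\diagram') \leq \distH$.

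The one step you leave open --- matching the points in the thin inner and outer shells --- is therefore the entire content of the argument, and your worry about it is well founded: the containment $B_r^{s}(x) \subset B_{r'}^{s'}(x)_\epsilon$ does \emph{not} follow from $|r-r'|\le\epsilon_1$, $|s-s'|\le\epsilon_2$ for an arbitrary finite metric space. Concretely, take $\X=\{x,y,z\}$ with $\dist(x,y)=1$, $\dist(x,z)=\dist(y,z)=10$, and radii $r=1$, $r'=1.5$, $s=s'=10$: then $B_r^s(x)=\{y,z\}$ while $B_{r'}^{s'}(x)=\{z\}$, the Hausdorff distance is $10$, and the degree-$0$ diagrams differ by a bar of length $10$, so $\frac{1}{2}\distB=5 \gg \epsilon_1=0.5$. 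So the radial displacement cannot in general be ``charged'' to a nearby witness; an isolated sample sitting just inside the shifted threshold breaks both your correspondence construction and the paper's asserted containment. The statement and its proof implicitly require the kind of density or regularity hypothesis you mention (every point of one annulus has a neighbour in the other within the radius perturbation), which holds for sufficiently dense samples of well-behaved spaces but not for finite metric spaces per se. In short: your proposal is the same proof as the paper's, is honest about where the real work lies, and the obstacle you flag is a genuine gap in the paper's own one-line justification rather than a defect introduced by you --- but as written, neither argument closes that gap without an additional sampling assumption.
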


\begin{figure}[tbp]
  \centering
  \begin{tikzpicture}
    \centering
    \begin{groupplot}[%
      group style = {%
        group size     = 3 by 1,
        horizontal sep = 0.25cm,
      },
      tick style  = {%
        draw = none,
      },
      set layers  = standard,
      axis on top = true,
      ticks       = none,
      xmin        = -0.4,
      xmax        =  0.4,
      ymin        = -0.4,
      ymax        =  0.4,
      height      = 4.75cm,
      width       = 4.75cm,
    ]%
      \nextgroupplot%
        \addplot[only marks, mark size = 1.0pt] table {Data/Torus_2D.txt};

        \coordinate (UR) at (0.35,  0.10);
        \coordinate (LR) at (0.35, -0.10);

        \begin{pgfonlayer}{axis lines}
          \draw[cardinal, thick] (LR) rectangle (0.15, 0.10);
        \end{pgfonlayer}

      \nextgroupplot[
        xmin               =  0.15,
        xmax               =  0.35,
        ymin               = -0.15,
        ymax               =  0.15,
        enlarge x limits   =  0.05,
        unit vector ratio* = 1 1 1,
      ]%

        \addplot[only marks, mark size = 1.0pt] table {Data/Torus_2D_Annulus.txt};%

        \node at (rel axis cs:0.5, 0.925) {$B_{\begingroup\color{bleu}r\endgroup}^{\begingroup\color{cardinal}s\endgroup}(x)$};

        \coordinate[label = {%
            [anchor = north west, label distance = 0.1cm]
          {$x$}
        }] (O) at (0.25, 0.0);

        \filldraw[even odd rule, pattern color = lightgrey, pattern = crosshatch]
            (O) circle (0.05)
            (O) circle (0.10);

        \begin{pgfonlayer}{axis lines}
          \draw[bleu] (O) -- node[label = {%
            [anchor = south, label distance = -0.1cm]
            {$r$}
          }] {} ++(10:0.55cm);

          \draw[cardinal] (O) -- node[label = {%
            [anchor = north west, label distance = -0.15cm]
            {$s$}
          }] {} ++(160:1.05cm);

          \filldraw (O) circle (1.0pt);
        \end{pgfonlayer}

        \draw[bleu]     (O) circle (0.05);
        \draw[cardinal] (O) circle (0.10);

      \nextgroupplot[
        xmin               =  0.15,
        xmax               =  0.35,
        ymin               = -0.15,
        ymax               =  0.15,
        enlarge x limits   =  0.05,
        unit vector ratio* = 1 1 1,
      ]%

        \addplot[only marks, mark size = 1.0pt] table {Data/Torus_2D_Annulus.txt};%

        \coordinate[label = {%
            [anchor = north west, label distance = 0.1cm]
          {$x$}
        }] (O) at (0.25, 0.0);

        \filldraw (O) circle (1.0pt);

        \filldraw[even odd rule, pattern color = lightgrey, pattern = crosshatch]
            (O) circle (0.05)
            (O) circle (0.10);

        \draw[bleu]     (O) circle (0.05);
        \draw[cardinal] (O) circle (0.10);

        \coordinate (P001) at (0.345231,-0.00665766);
\coordinate (P002) at (0.188186,-0.0772709);
\coordinate (P003) at (0.274898,-0.0875508);
\coordinate (P004) at (0.21304,0.0374275);
\coordinate (P005) at (0.166137,-0.034113);
\coordinate (P006) at (0.292153,-0.0898842);
\coordinate (P007) at (0.197114,-0.00571057);
\coordinate (P008) at (0.303857,0.0108488);
\coordinate (P009) at (0.245757,0.0837228);
\coordinate (P010) at (0.155615,0.00490951);
\coordinate (P011) at (0.185478,0.0736299);
\coordinate (P012) at (0.211186,-0.0660789);
\coordinate (P013) at (0.195135,-0.0572786);
\coordinate (P014) at (0.152069,0.00432096);
\coordinate (P015) at (0.20751,-0.084676);
\coordinate (P016) at (0.272308,-0.0937317);
\coordinate (P017) at (0.21948,0.082995);
\coordinate (P018) at (0.21161,-0.0331234);
\coordinate (P019) at (0.236007,0.0500223);
\coordinate (P020) at (0.1848,0.0114699);
\coordinate (P021) at (0.314332,-0.0071042);
\coordinate (P022) at (0.217651,-0.0426942);
\coordinate (P023) at (0.262143,-0.0851891);
\coordinate (P024) at (0.320057,-0.00601757);
\coordinate (P025) at (0.171512,-0.0492367);
\coordinate (P026) at (0.327326,0.0569046);
\coordinate (P027) at (0.336687,-0.0194342);
\coordinate (P028) at (0.210983,0.0471004);
\coordinate (P029) at (0.217524,-0.0661107);
\coordinate (P030) at (0.192301,0.0788238);
\coordinate (P031) at (0.15242,-0.0198346);
\coordinate (P032) at (0.336223,0.00663752);
\coordinate (P033) at (0.293994,-0.0516919);
\coordinate (P034) at (0.190139,0.0406642);
\coordinate (P035) at (0.23454,-0.0741897);
\coordinate (P036) at (0.294462,-0.0765032);
\coordinate (P037) at (0.31472,-0.0740029);
\coordinate (P038) at (0.29984,0.0776311);
\coordinate (P039) at (0.281212,-0.0912245);
\coordinate (P040) at (0.153897,0.0194355);
\coordinate (P041) at (0.159054,0.00931338);
\coordinate (P042) at (0.323245,0.0269332);
\coordinate (P043) at (0.32607,0.0172505);
\coordinate (P044) at (0.163798,-0.0453029);
\coordinate (P045) at (0.256676,0.0543029);
\coordinate (P046) at (0.175665,-0.00999566);
\coordinate (P047) at (0.327044,-0.05046);
\coordinate (P048) at (0.242518,0.095824);
\coordinate (P049) at (0.166036,-0.0332803);
\coordinate (P050) at (0.334628,-0.00698864);
\coordinate (P051) at (0.202399,-0.075093);
\coordinate (P052) at (0.2835,0.0437453);
\coordinate (P053) at (0.182721,0.0115832);
\coordinate (P054) at (0.303006,0.0782397);
\coordinate (P055) at (0.155959,0.0067218);
\coordinate (P056) at (0.301821,-0.0621158);
\coordinate (P057) at (0.332855,-0.0147544);
\coordinate (P058) at (0.254477,0.0646313);
\coordinate (P059) at (0.250312,-0.087451);
\coordinate (P060) at (0.348743,0.00996656);
\coordinate (P061) at (0.318264,-0.0461298);
\coordinate (P062) at (0.195072,0.0688528);
\coordinate (P063) at (0.165252,0.0217276);
\coordinate (P064) at (0.31716,-0.0672742);
\coordinate (P065) at (0.296114,0.0814445);
\coordinate (P066) at (0.300068,0.0749116);
\coordinate (P067) at (0.19838,-0.0491572);
\coordinate (P068) at (0.201423,0.0188209);
\draw (P001) -- (P024);
\draw (P001) -- (P027);
\draw (P001) -- (P032);
\draw (P001) -- (P050);
\draw (P001) -- (P057);
\draw (P001) -- (P060);
\draw (P002) -- (P012);
\draw (P002) -- (P013);
\draw (P002) -- (P015);
\draw (P002) -- (P051);
\draw (P002) -- (P067);
\draw (P003) -- (P006);
\draw (P003) -- (P016);
\draw (P003) -- (P023);
\draw (P003) -- (P036);
\draw (P003) -- (P039);
\draw (P003) -- (P059);
\draw (P004) -- (P019);
\draw (P004) -- (P028);
\draw (P004) -- (P034);
\draw (P004) -- (P068);
\draw (P005) -- (P025);
\draw (P005) -- (P031);
\draw (P005) -- (P044);
\draw (P005) -- (P046);
\draw (P005) -- (P049);
\draw (P006) -- (P016);
\draw (P006) -- (P036);
\draw (P006) -- (P037);
\draw (P006) -- (P039);
\draw (P006) -- (P056);
\draw (P007) -- (P020);
\draw (P007) -- (P046);
\draw (P007) -- (P053);
\draw (P007) -- (P068);
\draw (P008) -- (P021);
\draw (P008) -- (P024);
\draw (P008) -- (P042);
\draw (P008) -- (P043);
\draw (P009) -- (P017);
\draw (P009) -- (P048);
\draw (P009) -- (P058);
\draw (P010) -- (P014);
\draw (P010) -- (P020);
\draw (P010) -- (P031);
\draw (P010) -- (P040);
\draw (P010) -- (P041);
\draw (P010) -- (P046);
\draw (P010) -- (P053);
\draw (P010) -- (P055);
\draw (P010) -- (P063);
\draw (P011) -- (P030);
\draw (P011) -- (P062);
\draw (P012) -- (P013);
\draw (P012) -- (P015);
\draw (P012) -- (P022);
\draw (P012) -- (P029);
\draw (P012) -- (P035);
\draw (P012) -- (P051);
\draw (P012) -- (P067);
\draw (P013) -- (P018);
\draw (P013) -- (P022);
\draw (P013) -- (P025);
\draw (P013) -- (P029);
\draw (P013) -- (P051);
\draw (P013) -- (P067);
\draw (P014) -- (P031);
\draw (P014) -- (P040);
\draw (P014) -- (P041);
\draw (P014) -- (P046);
\draw (P014) -- (P055);
\draw (P014) -- (P063);
\draw (P015) -- (P029);
\draw (P015) -- (P035);
\draw (P015) -- (P051);
\draw (P016) -- (P023);
\draw (P016) -- (P036);
\draw (P016) -- (P039);
\draw (P016) -- (P059);
\draw (P017) -- (P030);
\draw (P017) -- (P048);
\draw (P017) -- (P062);
\draw (P018) -- (P022);
\draw (P018) -- (P067);
\draw (P019) -- (P028);
\draw (P019) -- (P045);
\draw (P019) -- (P058);
\draw (P020) -- (P034);
\draw (P020) -- (P041);
\draw (P020) -- (P046);
\draw (P020) -- (P053);
\draw (P020) -- (P055);
\draw (P020) -- (P063);
\draw (P020) -- (P068);
\draw (P021) -- (P024);
\draw (P021) -- (P027);
\draw (P021) -- (P032);
\draw (P021) -- (P043);
\draw (P021) -- (P050);
\draw (P021) -- (P057);
\draw (P022) -- (P029);
\draw (P022) -- (P067);
\draw (P023) -- (P035);
\draw (P023) -- (P039);
\draw (P023) -- (P059);
\draw (P024) -- (P027);
\draw (P024) -- (P032);
\draw (P024) -- (P043);
\draw (P024) -- (P050);
\draw (P024) -- (P057);
\draw (P025) -- (P044);
\draw (P025) -- (P049);
\draw (P025) -- (P067);
\draw (P027) -- (P032);
\draw (P027) -- (P050);
\draw (P027) -- (P057);
\draw (P028) -- (P034);
\draw (P028) -- (P062);
\draw (P028) -- (P068);
\draw (P029) -- (P035);
\draw (P029) -- (P051);
\draw (P029) -- (P067);
\draw (P030) -- (P062);
\draw (P031) -- (P041);
\draw (P031) -- (P044);
\draw (P031) -- (P046);
\draw (P031) -- (P049);
\draw (P031) -- (P055);
\draw (P032) -- (P042);
\draw (P032) -- (P043);
\draw (P032) -- (P050);
\draw (P032) -- (P057);
\draw (P032) -- (P060);
\draw (P033) -- (P036);
\draw (P033) -- (P056);
\draw (P033) -- (P061);
\draw (P033) -- (P064);
\draw (P034) -- (P062);
\draw (P034) -- (P068);
\draw (P035) -- (P059);
\draw (P036) -- (P037);
\draw (P036) -- (P039);
\draw (P036) -- (P056);
\draw (P036) -- (P064);
\draw (P037) -- (P047);
\draw (P037) -- (P056);
\draw (P037) -- (P061);
\draw (P037) -- (P064);
\draw (P038) -- (P054);
\draw (P038) -- (P065);
\draw (P038) -- (P066);
\draw (P040) -- (P041);
\draw (P040) -- (P053);
\draw (P040) -- (P055);
\draw (P040) -- (P063);
\draw (P041) -- (P046);
\draw (P041) -- (P053);
\draw (P041) -- (P055);
\draw (P041) -- (P063);
\draw (P042) -- (P043);
\draw (P043) -- (P050);
\draw (P043) -- (P060);
\draw (P044) -- (P049);
\draw (P045) -- (P052);
\draw (P045) -- (P058);
\draw (P046) -- (P049);
\draw (P046) -- (P053);
\draw (P046) -- (P055);
\draw (P047) -- (P056);
\draw (P047) -- (P061);
\draw (P047) -- (P064);
\draw (P050) -- (P057);
\draw (P050) -- (P060);
\draw (P051) -- (P067);
\draw (P053) -- (P055);
\draw (P053) -- (P063);
\draw (P053) -- (P068);
\draw (P054) -- (P065);
\draw (P054) -- (P066);
\draw (P055) -- (P063);
\draw (P056) -- (P061);
\draw (P056) -- (P064);
\draw (P057) -- (P060);
\draw (P061) -- (P064);
\draw (P065) -- (P066);

        \foreach \s in {P001,P002,P003,P004,P005,P006,P007,P008,P009,P010,P011,P012,P013,P014,P015,P016,P017,P018,P019,P020,P021,P022,P023,P024,P025,P026,P027,P028,P029,P030,P031,P032,P033,P034,P035,P036,P037,P038,P039,P040,P041,P042,P043,P044,P045,P046,P047,P048,P049,P050,P051,P052,P053,P054,P055,P056,P057,P058,P059,P060,P061,P062,P063,P064,P065,P066,P067,P068}
        {%
          \edef\temp{\noexpand\filldraw[emerald, fill opacity = 0.5]
            (\s) circle (0.015);}
          \temp
        }

        \node at (rel axis cs:0.5, 0.925) {$\vietoris(B_{\begingroup\color{bleu}r\endgroup}^{\begingroup\color{cardinal}s\endgroup}(x), \begingroup\color{emerald}t\endgroup)$};

    \end{groupplot}%

    \draw[thick, cardinal] (UR) -- (group c2r1.north west);
    \draw[thick, cardinal] (LR) -- (group c2r1.south west);

  \end{tikzpicture}

  \caption{%
    The intrinsic annulus $B_r^{s}(x)$ around a point $x$ in
    a metric space $(\X, \dist)$, as well as one filtration step for
    some choice of $t$. By adjusting~$r$ and $s$, we obtain
    a tri-filtration.
  }
  \label{fig:Annulus illustration}
\end{figure}
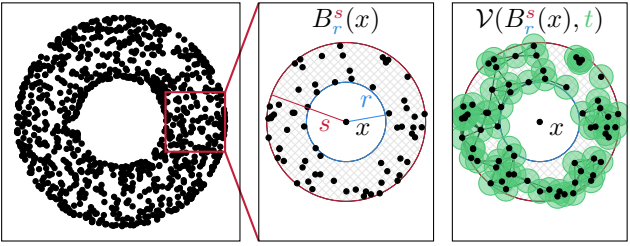

For a finite set of points $\X \subset \reals^N$,
we define the \textbf{persistent intrinsic dimension~(PID)} of $x \in \X$
at scale $\epsilon$ as
$i_x(\epsilon) := \max \mleft\{ i \in \naturals \mid
\ \exists\, r < s <\epsilon \text{ s.t. } \PH_{i-1}(\filt(B_r^s(x),\bullet)) \neq 0\mright\}$.
This measure characterises the intrinsic dimension of data in
a multi-scale fashion. We can also prove that we recover the correct
dimension in case our data set constitutes a manifold sample.
\begin{restatable}{theorem}{thmdimension}
  Let $M \subset \reals^N$ be an $n$-dimensional compact smooth manifold and let $\X := \{ x_1, \dots ,x_S\}$ be a collection of uniform samples from $M$.
  For a sufficiently large $S$ and $\filt = \vietoris$, there exist constants $\epsilon_1, \epsilon_2>0$ such that
  $i_x(\epsilon) = n$
  for all $\epsilon_1 < \epsilon < \epsilon_2$ and any point $x \in \X$.
  Moreover, $\epsilon_1$ can be chosen arbitrarily small by increasing $S$.
  \label{thm:Intrinsic dimension manifolds}
\end{restatable}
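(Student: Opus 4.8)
The plan is to reduce the statement to a purely local computation: for a point $x$ on a smooth $n$-manifold and sufficiently small radii, an intrinsic annulus around $x$ is homotopy equivalent to the sphere $S^{n-1}$, and a dense enough sample reproduces this homotopy type through its Vietoris--Rips filtration. I would fix $\epsilon_2$ as a geometric constant bounded by the reach of $M$, so that for every $x \in M$ and all $0 < r < s < \epsilon_2$ the annular region $A_r^s(x) := \{ y \in M \mid r \leq \dist(x,y) \leq s \}$ deformation retracts onto $S^{n-1}$: the nearest-point projection onto $M$ together with a geodesic chart makes the ball $\{ y \in M \mid \dist(x,y) \leq s \}$ an $n$-disk, and excising a concentric smaller ball leaves a collar homotopy equivalent to $S^{n-1}$. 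The constant $\epsilon_1$, by contrast, is tied to the sampling density: I would invoke a concentration bound for uniform samples showing that $\delta := \distH(\X, M)$ is at most a vanishing function of $S$ with high probability, and a union bound over the $S$ basepoints guarantees the estimates below simultaneously for every $x \in \X$.

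For the lower bound $i_x(\epsilon) \geq n$ I only need one good annulus. Since $\epsilon > \epsilon_1$, I would pick well-separated radii $\epsilon_1 < r < s < \epsilon$ for which $B_r^s(x)$ is a $\delta$-dense sample of $A_r^s(x)$. A reconstruction theorem of Niyogi--Smale--Weinberger / Latschev type then yields a window of scales $t$ on which $\vietoris(B_r^s(x), t)$ is homotopy equivalent to $A_r^s(x) \simeq S^{n-1}$; in particular $\Hom_{n-1}(\vietoris(B_r^s(x), t)) \neq 0$ for such $t$, so the persistence module $\PH_{n-1}(\vietoris(B_r^s(x), \bullet))$ is nonzero and $n$ is admissible in the definition of $i_x(\epsilon)$. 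Because the required density $\delta \ll r$ can be met for arbitrarily small $r$ by increasing $S$, this simultaneously shows that $\epsilon_1$ may be driven to $0$, yielding the final ``moreover'' claim.

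For the upper bound $i_x(\epsilon) \leq n$ I must rule out a nonzero $\PH_j(\vietoris(B_r^s(x), \bullet))$ for every degree $j \geq n$ and every admissible pair $r < s < \epsilon$. On the reconstruction window the homology agrees with that of $S^{n-1}$, which vanishes in all degrees $\geq n$, and once $t$ exceeds the diameter of $B_r^s(x)$ the complex is a single simplex and hence contractible. The delicate regime is the intermediate range of scales the reconstruction theorem does not cover, where the Vietoris--Rips complex of a sampled sphere can acquire spurious high-dimensional homology (the Adamaszek--Adams phenomenon, in which $\vietoris(S^1,\bullet)$ passes through $S^3, S^5, \dots$). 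Crucially, this cannot be fixed by shrinking $\epsilon_2$ alone: annuli of a fixed shape $s/r$ are geometrically self-similar, so their Vietoris--Rips homotopy types are scale-invariant, and the control must instead be uniform over the entire admissible range of shapes $0 < r < s < \epsilon$.

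The main obstacle is precisely this last step. For ``fat'' annuli the inner hole fills before any spurious sphere transition, so the complex passes directly from $\simeq S^{n-1}$ to contractible and never carries homology above degree $n-1$; the hard case is ``thin'' annuli with $s/r$ near $1$, where the small inner hole may survive past the scale at which the core $(n-1)$-sphere would begin to spawn a higher-dimensional class. I would attack this by an explicit interleaving between the Vietoris--Rips filtration of a thin annulus and that of its core sphere, together with a persistence-lifetime estimate forcing any degree-$\geq n$ class to be killed as soon as it appears. Establishing this bound quantitatively and uniformly in $r$, $s$, and $t$ is where I expect the bulk of the technical effort to lie; the lower bound and the manifold-recovery statement follow comparatively directly from standard sampling and nerve arguments.
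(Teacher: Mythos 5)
Your lower bound ($i_x(\epsilon)\geq n$) takes essentially the same route as the paper, only with the sampling details made explicit: the paper likewise restricts to a small, nearly flat chart and asserts that for $S$ sufficiently large there is a scale $t$ at which $\vietoris(B_r^s(x),t)$ is homotopy equivalent to $S^{n-1}$, so that $\PH_{n-1}(\vietoris(B_r^s(x),\bullet))\neq 0$; the ``moreover'' clause is handled in both arguments by increasing $S$ to shrink $\epsilon_1$. Your appeal to a Niyogi--Smale--Weinberger/Latschev-type reconstruction theorem is a more rigorous version of what the paper leaves implicit.

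The genuine gap is in your upper bound, and you name it yourself: you do not rule out $\PH_j(\vietoris(B_r^s(x),\bullet))\neq 0$ for $j\geq n$ uniformly over all admissible pairs $r<s<\epsilon$, and without that step you have only shown $i_x(\epsilon)\geq n$, not equality. Your proposed interleaving-plus-lifetime estimate for thin annuli is left as a programme, not a proof, so the attempt is incomplete as written. You should know that the paper's own proof disposes of this step in a single sentence: it claims that because the chart $U$ can be taken arbitrarily close to flat, $\Hom_i(\vietoris(B_r^s(x),t))=0$ for all $i\geq n$ and \emph{all} $t$, and it does not engage with the scale-invariance objection you raise (a densely sampled thin flat annulus still has a Vietoris--Rips filtration that can pass through higher-dimensional homotopy types at intermediate $t$, exactly as for $S^1$). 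So your diagnosis of where the difficulty lies is more careful than the paper's treatment of that same step, but your proposal does not close the gap either; to obtain a complete argument you would need to carry out the uniform estimate you sketch, or else work with a persistence-thresholded variant of $i_x(\epsilon)$ of the kind the paper itself recommends in its experimental section.
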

\cref{thm:Intrinsic dimension manifolds} implies 
that $i_x(\epsilon)$ computes the correct intrinsic dimension of $M$ in
a certain range of values $\epsilon>0$, provided the sample size is
sufficiently large.
Moreover, $i_x(\epsilon)$ persists in this range, which
suggests considering a collection of $i_x(\epsilon)$ for varying
$\epsilon$ to analyse the intrinsic dimension of $x$.
We also have the following corollary, which specifically addresses
stratified spaces such as the `pinched torus,' implying that our
method can correctly detect the intrinsic dimension of individual
strata. PID is thus capable of handling large classes of `non-manifold'
data sets.
\begin{corollary}
Let $X=X_n \supset X_{n-1} \supset X_{n-2} \supset \dots \supset X_{-1}
= \emptyset$ be an $n$-dimensional compact stratified simplicial
complex, s.t.\ $X_i \setminus X_{i-1}$ is smooth for every $i$. For a fixed $i$,
let $\X_i := \{ x_1, \dots ,x_S\}$ be a collection of uniform samples from
$X_i \setminus X_{i-1}$. For a sufficiently large $S$ and $\filt = \vietoris$, there are constants
$\epsilon_1, \epsilon_2>0$ such that
$i_x(\epsilon) = i$
for all $\epsilon_1 < \epsilon < \epsilon_2$ and any point $x \in \X_i$.
Moreover, $\epsilon_1$ can be chosen arbitrarily small by increasing $S$.
\end{corollary}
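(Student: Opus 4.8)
The plan is to reduce this statement directly to \cref{thm:Intrinsic dimension manifolds} by exploiting the \emph{locality} of the persistent intrinsic dimension. The essential observation is that, by definition, $i_x(\epsilon)$ depends only on those sample points lying within distance $\epsilon$ of $x$, since the annulus $B_r^s(x)$ requires $s < \epsilon$. Consequently, $i_x(\epsilon)$ is entirely insensitive to the global topology of the ambient stratified complex $X$ and to the non-compactness of the open stratum; it is determined solely by the local structure of the sample near $x$. Crucially, the samples $\X_i$ are drawn \emph{exclusively} from $X_i \setminus X_{i-1}$, so every point appearing in any annulus $B_r^s(x)$ belongs to the stratum, and there is no interference from neighbouring strata.

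First I would fix an arbitrary $x \in X_i \setminus X_{i-1}$ and observe that, since $X_{i-1}$ is a closed subcomplex and $x \notin X_{i-1}$, the distance $\delta_x := \dist(x, X_{i-1})$ is strictly positive. By hypothesis, $X_i \setminus X_{i-1}$ is a smooth $i$-dimensional manifold, so for every radius $s < \delta_x$ the set $\{ y \in X_i \setminus X_{i-1} \mid \dist(x,y) \leq s \}$ is a genuine $i$-dimensional manifold patch, homeomorphic to an open subset of $\reals^i$, whose link is an $(i-1)$-sphere. The restriction of the uniform sample to this patch is, up to reweighting by the local volume form, again a uniform manifold sample.

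Next I would invoke the local computation underlying \cref{thm:Intrinsic dimension manifolds}: for an $i$-dimensional manifold there is a scale window $(\epsilon_1, \epsilon_2)$ in which the Vietoris--Rips complex of the annulus $B_r^s(x)$ recovers the homotopy type of $S^{i-1}$, so that $\PH_{i-1}(\vietoris(B_r^s(x), \bullet)) \neq 0$ while $\PH_j(\vietoris(B_r^s(x), \bullet)) = 0$ for all $j \geq i$, there being no homology of $S^{i-1}$ above degree $i-1$. By the definition of $i_x(\epsilon)$ as a maximum, the nonvanishing in degree $i-1$ forces $i_x(\epsilon) \geq i$, and the vanishing in all degrees $\geq i$ forces $i_x(\epsilon) \leq i$; hence $i_x(\epsilon) = i$ throughout the window. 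The assertion that $\epsilon_1$ may be taken arbitrarily small as $S \to \infty$ is inherited verbatim, since it is governed purely by the local sampling density, which increases with $S$.

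The main obstacle is ensuring that $\epsilon_1, \epsilon_2$ can be chosen \emph{uniformly} over all $x \in \X_i$ rather than depending on the individual $\delta_x$. This is delicate precisely for sample points lying arbitrarily close to the lower stratum $X_{i-1}$, where $\delta_x \to 0$ forces the admissible upper scale to collapse. I would resolve this by controlling the relevant geometric quantities—the reach and injectivity radius of the stratum—uniformly on the region from which samples are effectively drawn: since $X$ is compact, these are bounded below on any compact core bounded away from $X_{i-1}$, which yields a uniform window $(\epsilon_1, \epsilon_2)$ valid simultaneously for all such $x$. Extending the uniform bound to the entire open stratum, if required, would demand an additional regularity assumption on how the stratum approaches $X_{i-1}$, but for the conclusion as stated the locality reduction together with uniform reach control suffices.
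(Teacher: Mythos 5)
Your proposal is correct and matches the paper's intent: the corollary is stated without a separate proof precisely because it is meant to follow by applying \cref{thm:Intrinsic dimension manifolds} to the smooth $i$-dimensional stratum $X_i \setminus X_{i-1}$, from which the samples are drawn exclusively, exactly as you argue via the locality of $i_x(\epsilon)$. Your additional care about the non-compactness of the open stratum and the collapse of $\delta_x$ near $X_{i-1}$ addresses a subtlety the paper glosses over, and your resolution (uniform reach control on a compact core, or equivalently finiteness of the sample) is a reasonable way to close it.
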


\subsection{Euclidicity}\label{sec:Euclidicity}

Knowledge about the intrinsic dimension of a neighbourhood is crucial
for measuring to what extent such a neighbourhood deviates from being
Euclidean.
We refer to this deviation as \emph{Euclidicity}, with the understanding
that low values indicate Euclidean neighbourhoods while high values
indicate singular regions of a data set.
\emph{Euclidicity} can be calculated without stringent assumptions on
manifoldness, requiring only an estimate of the intrinsic dimension~$n$
of~$x$. The previously-described PID estimation procedure is applicable
in this setting and may be used to obtain $n$, for example by
calculating statistics on the set of $i_x(\epsilon)$ for varying
locality parameters $\epsilon$. \emph{Euclidicity} can also
use other dimension estimation procedures, which is
advantageous when additional knowledge about the expected structures
is available~(see \citet{Camastra16a} for a survey).

\fcolorbox{black}{lightgrey}{%
  \parbox{\dimexpr\linewidth-2\fboxsep-2\fboxrule}{%
    The main idea of \emph{Euclidicity} involves assessing how far
    a given neighbourhood of a point~$x$ is from being Euclidean. 
  }
}

To this end, we compare it to a Euclidean model space, measuring the
deviation of their corresponding persistent local homology features.
We first define the Euclidean annulus~$\EucB_r^s(x)$ of $x$ for
parameters $r$ and $s$ to be a set of random uniform samples of $\{
y \in \reals^{n} \mid r \leq \dist(x, y) \leq s \}$ such that $\vert
\EucB_r^s(x) \vert = \vert B_r^s(x) \vert$.
Here, $r$ and~$s$ correspond to the inner and outer radius of the
annulus, respectively.
For $r' \leq r$ and $s \leq s'$ we extend $\EucB_r^s(x)$ by sampling
additional points to obtain $\EucB_{r'}^{s'}(x)$ with $\vert
\EucB_{r'}^{s'}(x) \vert = \vert B_{r'}^{s'}(x) \vert$.
Iterating this procedure leads to a tri-filtration
$\mleft(\filt\mleft(\EucB_r^{s}\mleft(x\mright),t\mright)\mright)_{r,s,t}$
for any filtration $\filt$, following our description in
\cref{sec:Persistent intrinsic dimension}.
We now define the persistent local homology of a Euclidean model space
as
\begin{equation}
  \PLH_i^{\mathds{E}}(x; \filt) := \PH_i\mleft(\filt\mleft(\EucB_{\bullet}^{\bullet}(x),\bullet\mright)\mright).
\end{equation}
Again, for a Vietoris--Rips filtration $\vietoris$, we use a short-form
notation, i.e.\ $\PLH_i^{\mathds{E}}(x) := \PLH_i^{\mathds{E}}(x;
\vietoris)$.
Notice that $\PLH_i^{\mathds{E}}(x)$ implicitly depends on the choice of
intrinsic dimension $n$, and on the samples that are generated randomly.
To remove the dependency on the samples, we consider
$\PLH_i^{\mathds{E}}(x)$ to be a sample of a random variable
$\mathop{\boldsymbol{\PLH}}_i^{\mathds{E}}(x)$. Let $\Dist(\cdot,
\cdot)$ be a distance measure for 3-parameter persistence modules, such
as the \emph{interleaving distance}~\citep{Lesnick15a}.
We then define the \textbf{Euclidicity} of~$x$, denoted by
$\Euclidicity(x)$, as the expected value of these distances, i.e.\
\begin{equation}
  \Euclidicity(x) := \expectation\mleft[\Dist\mleft(\PLH_{n-1}(x),
  \boldsymbol{\PLH}_{n-1}^{\mathds{E}}(x) \mright)\mright].
  \label{eq:Euclidicity}
\end{equation}
This quantity essentially assesses how far~$x$ is from admitting
a regular Euclidean neighbourhood. The choice of $\filt$ and
$\Dist(\cdot, \cdot)$ leaves us multiple ways of implementing
\cref{eq:Euclidicity} in practice. In the following, we describe one
particular implementation with beneficial robustness properties.

\paragraph{Implementation.}
%
Calculating $\Euclidicity(x)$ requires different choices, namely
\begin{inparaenum}[(i)]
  \item a range of locality scales,
  \item a filtration, and
  \item a distance metric between filtrations~$\Dist$.
\end{inparaenum}
Using a grid~$\Gamma$ of possible radii~$(r, s)$ with $r < s$, we
approximate \cref{eq:Euclidicity} using the \emph{mean of the bottleneck
distances of fibred Vietoris--Rips barcodes}, i.e.\
\begin{equation}
    \Euclidicity(x) \approx
  \Dist\mleft(\PLH_{i}\mleft(x\mright),\PLH_{i}^{\mathds{E}}(x)\mright)
  := \frac{1}{C} \!\!\! \sum_{(r, s) \in \Gamma} \!\!\! \distB^{r, s}
  \label{eq:Euclidicity implementation}
\end{equation}
where $C$ is equal to the cardinality of the grid, and
%
$
\distB^{r, s} \!\!\! := \distB\mleft(
    \PH_i\mleft(\vietoris\mleft(B_{r}^{s}\mleft(x\mright), \bullet\mright)\mright),
    \PH_i\mleft(\vietoris\mleft(\EucB_{r}^{s}\mleft(x\mright), \bullet\mright)\mright)
  \mright)$,
with $\PLH_{i}^{\mathds{E}}(x)$ referring to a sample from a Euclidean annulus
of the same size as the intrinsic annulus around~$x$.
\cref{eq:Euclidicity implementation} can be implemented using effective
persistent homology calculation methods~\citep{Bauer21a}, thus
permitting an integration into existing TDA and machine learning frameworks~\citep{gudhi:urm, Tauzin20a}.
\cref{app:Pseudocode} provides pseudocode implementations, while
\cref{sec:Experiments} discusses how to pick these parameters in
practice. We make our
framework publicly available.\footnote{%
  \ifarXiv%
  See \url{https://github.com/aidos-lab/PLH} for the code and experiments.
  \else%
  See the supplementary materials for the code and experiments.
  \fi
}

As the following theorem shows, our approximation of
\cref{eq:Euclidicity implementation} is justified in the sense that for
smooth manifolds, $\Euclidicity(x)$ tends to be arbitrarily small in
a large-sample regime.
\begin{restatable}{theorem}{thmmanifolds}
  Let $M \subset \reals^N$ be a smooth $n$-dimensional manifold and
  let $\X \subset M$ be a finite sample of size $S:= | \X |$. For a given
  $\epsilon > 0$, sufficiently large $S$ and a point $x \in \X$, there
  exists  $s_{\epsilon} > 0$ that only depends on $\epsilon$ and the
  curvature of x (w.r.t.\ $M$), such that the approximation of $\Euclidicity(x)$
  via \cref{eq:Euclidicity implementation} is bounded above by
  $\epsilon$, for any grid $\Gamma$ with maximum outer radius
  $s_{\epsilon}$.
  \label{thm:Manifold consistency}
\end{restatable}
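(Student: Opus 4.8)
The plan is to bound each summand $\distB^{r,s}$ in \cref{eq:Euclidicity implementation} uniformly and then average, exploiting two facts: for small radii a smooth manifold is metrically close to its tangent space, with distortion governed by curvature, and the bottleneck distance between Vietoris--Rips diagrams is controlled by metric distortion (the stability principle underlying \cref{thm:Stability} and \citet{Cohen-Steiner07}). Throughout, the relevant homological degree is $i = n-1$, since a regular point of an $n$-manifold has an annular neighbourhood homotopy equivalent to $S^{n-1}$.

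First I would pass to normal coordinates: let $\varphi = \exp_x^{-1}$ map a geodesic ball around $x$ into $T_x M \cong \reals^n$. Standard Jacobi-field comparison estimates give that for $y, z$ within geodesic distance $s$ of $x$ one has $\bigl|\dist_M(y,z) - \|\varphi(y)-\varphi(z)\|\bigr| \leq \delta(s)$ with $\delta(s) = C_\kappa\, s^3$, where $C_\kappa$ depends only on a curvature bound of $M$ near $x$. Restricting $\varphi$ to the intrinsic annulus $B_r^s(x)$ thus produces a point set inside the Euclidean annulus $A_r^s := \{ y \in \reals^n \mid r \leq \|y\| \leq s \}$ whose pairwise distances match those of $B_r^s(x)$ up to $\delta(s)$. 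This is the step where the stated dependence of $s_\epsilon$ on both $\epsilon$ and the curvature of $x$ originates.

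Next I would convert this metric distortion into a bottleneck bound. The bijection $\varphi$ is a correspondence of distortion $\leq \delta(s)$, so the stability principle yields $\distB(\PH_{n-1}(\vietoris(B_r^s(x),\bullet)), \PH_{n-1}(\vietoris(\varphi(B_r^s(x)),\bullet))) \leq \delta(s)$ (up to the universal stability constant). It remains to compare $\varphi(B_r^s(x))$ with the independent uniform sample $\EucB_r^s(x)$ of equal cardinality. For sufficiently large $S$ the manifold sample is dense near $x$, so both $\varphi(B_r^s(x))$ and $\EucB_r^s(x)$ become dense in $A_r^s$; since they cover the same region with the same cardinality their densities agree, and each Vietoris--Rips filtration approximates that of $A_r^s$ in Hausdorff distance with error $\eta_S \to 0$. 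Stability then bounds the bottleneck distance between them by $\eta_S$, and chaining with the triangle inequality gives $\distB^{r,s} \leq \delta(s) + \eta_S$, uniformly over grid points with $s \leq s_\epsilon$.

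Finally, choosing $s_\epsilon$ so that $\delta(s_\epsilon) \leq \epsilon/2$ and $S$ large enough that $\eta_S \leq \epsilon/2$ over the finitely many grid entries makes every $\distB^{r,s} \leq \epsilon$, whence their mean---the approximation of $\Euclidicity(x)$---is $\leq \epsilon$. The main obstacle I anticipate is the comparison between $\varphi(B_r^s(x))$ and the independent sample $\EucB_r^s(x)$: one must control, uniformly in $(r,s)$, the Hausdorff fluctuation of a uniform sample of prescribed cardinality on an annulus, and guarantee that its inner and outer boundary spheres are resolved finely enough for the nerve to recover the homotopy type of $A_r^s$, so that no spurious long-lived $(n-1)$-class inflates the bottleneck distance. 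This density/covering-plus-concentration estimate is precisely where the ``sufficiently large $S$'' hypothesis---and, for the exact quantity in \cref{eq:Euclidicity}, the expectation over the random Euclidean samples---is genuinely needed.
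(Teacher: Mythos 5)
Your proposal is correct and follows essentially the same route as the paper's proof: the paper likewise reduces each grid term to a Hausdorff-proximity estimate between the intrinsic annulus and the Euclidean model annulus---obtained from near-flatness of a small neighbourhood (your curvature-controlled normal-coordinate step, stated less quantitatively) together with ``sufficiently large $S$''---and then applies the geometric stability theorem of \citet{chazal2014persistence} to convert this into a bottleneck bound of $\epsilon$ per grid point, which survives averaging over $\Gamma$. Your version merely makes the two legs explicit via a triangle inequality through $\exp_x^{-1}(B_r^s(x))$ with a quantitative distortion $\delta(s)$, and your flagged obstacle (uniform control of the sample's Hausdorff fluctuation) is exactly the step the paper also dispatches with the assertion that it ``can be achieved by increasing $S$ if necessary.''
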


\paragraph{Properties.}
%
The main appeal of our formulation is that calculating both PID and
Euclidicity does not require strong assumptions about the input data: we
only assume that the intrinsic dimension~$n$ of the data is
significantly lower than the ambient dimension~$N$. Treating dimension
as a local quantity that may vary across multiple scales makes
Euclidicity broadly applicable. Moreover, as we showed in
\cref{sec:Persistent intrinsic dimension}, our method is
\emph{guaranteed} to yield the right values for manifolds and stratified
simplicial complexes. This increases both the
practical applicability and expressivity, enabling our framework to
handle unions of manifolds of varying dimensions, for instance.
Euclidicity thus generalises to a larger class of spaces than existing
approaches~\citep{brown2023verifying}, permitting a more fine-grained
structural assessment.

\paragraph{Limitations.}
%
Our implementation of Euclidicity makes use of the Vietoris--Rips
complex, which is known to grow exponentially with increasing
dimensionality.
While all calculations of \cref{eq:Euclidicity} can be performed
\emph{in parallel}---thus substantially improving scalability vis-à-vis
persistent homology on the complete input data set, both in terms of
dimensions and in terms of samples---the memory requirements for
a full Vietoris--Rips complex construction may still prevent our
method to be applicable for some high-dimensional data sets.
This can be mitigated by using a different
filtration~\citep{Anai20a, Sheehy13a}. Our proofs do \emph{not}
assume a specific filtration, and we leave the derivation of
filtration-specific theoretical properties for future work. 
Finally, we remark that the reliability of the Euclidicity score depends on
the validity of the intrinsic dimension; otherwise, the comparison does
not take place with respect to the appropriate model space.

\section{Experiments}\label{sec:Experiments}

We demonstrate the expressivity of \method in different settings,
showing that it
\begin{inparaenum}[(i)]
  \item calculates the correct intrinsic dimension, and
  \item detects singularities
\end{inparaenum}
when analysing data sets with known singularities.
We also conduct a brief comparison with one-parameter approaches,
showcasing how our multi-scale approach results in more stable outcomes.
Finally, we analyse Euclidicity scores of benchmark and real-world
datasets, giving evidence that our technique can be used as a measure
for the geometric complexity of data.

\subsection{Parameter Selection}\label{sec:Parameter selection}

Since \cref{eq:Euclidicity} intrinsically incorporates multiple scales
of locality, we need to specify an upper bound for the radii~($r_{\min},
r_{\max}, s_{\min}, s_{\max}$) that define the respective annuli in practice.
Given a point $x$, we found the following procedure to be useful in
practice: we set $s_{\max}$, i.e.\
the maximum of the outer radius, to the distance to the $k$th
nearest neighbour of a point, and~$r_{\min}$, i.e\ the minimum inner
radius, to the smallest non-zero distance to a neighbour of~$x$.
Finally, we set the minimum outer radius~$s_{\min}$ and the maximum
inner radius~$r_{\max}$ to the distance to the $\lfloor \frac{k}{3}
\rfloor$th nearest neighbour.
While we find $k = 50$ to yield sufficient results, spaces with a high
intrinsic dimension may require larger values.
The advantage of using such a parameter selection procedure is that
it works in a data-driven manner, accounting for differences in
density.
Since our approach is inherently \emph{local}, we need to find a balance
between sample sizes that are sufficiently large to contain topological
information, while at the same time being sufficiently small to retain
a local perspective. We found the given range to be an appropriate
choice in practice.
As for the number of steps, we discretise the parameter range using~$20$
steps by default. Higher numbers are advisable when there are large
discrepancies between the radii, for instance when $s_{\max} \gg
r_{\max}$.

\subsection{Persistent Intrinsic Dimension is Expressive}

\begin{table}
  \centering
  \sisetup{
    separate-uncertainty = true,
    table-text-alignment = center,
    table-format         = 1.2,
    tight-spacing,
  }
  \setlength{\tabcolsep}{2.0pt}
  \caption{%
    Dimensionality estimates for the concatenation of $S^1$ and $S^2$,
    denoted by $S^1 \vee S^2$. Our PID measure is more capable of
    detecting the changes in dimensionality that arise from the
    concatenation.
  }
  \label{tab:Dimensionality estimates comparison}
  \begin{tabular}{@{}llSS[table-format=1.2(2)]S}
    \toprule
    {}                     & {\textsc{Method}} & {\textsc{min}} & {$\mu \pm \sigma$} & {\textsc{max}}\\
    \midrule
    \multirow{4}{*}{1D}    & \texttt{lpca}     & 1.00           & 1.42 \pm 0.78      & 3.00\\
                           & \texttt{twoNN}    & 0.83           & 1.00 \pm 0.07      & 1.20\\
                           & \texttt{DANCo}    & 1.00           & 1.00 \pm 0.01      & 1.16\\
                           &  PID              & 1.00           & 1.12 \pm 0.24      & 1.97\\
    \midrule
    \multirow{4}{*}{2D}    & \texttt{lpca}     & 2.00           & 2.88 \pm 0.32      & 3.00\\
                           & \texttt{twoNN}    & 1.01           & 1.90 \pm 0.36      & 2.53\\
                           & \texttt{DANCo}    & 1.00           & 2.10 \pm 0.32      & 3.00\\
                           &  PID              & 1.52           & 1.95 \pm 0.06      & 2.08\\
    \bottomrule
  \end{tabular}
\end{table}

We first analyse the behaviour of persistent intrinsic dimension~(PID) on
samples from a space obtained by concatenating $S^1$~(a circle) and
$S^2$~(a sphere) at a gluing point.
\cref{tab:Dimensionality estimates comparison} shows a comparison of
PID with state-of-the-art dimensionality estimators.\footnote{%
  Method names are taken from the \texttt{scikit-dimension} toolkit.
  See \cref{app:PID} for more details.

}
We find that PID outperforms all estimators in terms of mean and
standard deviation for the 2D points, thus correctly indicating that the
majority of all points admit non-singular 2D neighbourhoods.
For the 1D points, the mean of the dimensionality estimate of PID is
still close to the ground truth, while its standard deviation and
maximum correctly capture the fact that some 1D points are situated
closer to the gluing point. 
\cref{fig:S1vS2 dimensionality} exemplifies this behaviour and shows
a comparison between one of the estimators and our PID measure.
PID is more nuanced in capturing changes in local intrinsic dimension as
one approaches the gluing point, correctly showing that points of the
sphere admit 2D neighbourhoods.
This behaviour is in line with our philosophy of considering
dimensionality as an inherently local phenomenon. In case such behaviour
is not desirable for a specific data set, Euclidicity calculations
support \emph{any} dimensionality estimator; since such estimators do
not come with strong guarantees such as \cref{thm:Intrinsic dimension
manifolds}, their choice must be ultimately driven by the data set at
hand.  
See \cref{app:PID} for a more detailed analysis of these estimates.

\begin{figure}[tbp]
  \centering
  \subcaptionbox{\texttt{twoNN}}{%
    \includegraphics[width=0.50\linewidth, page = 1]{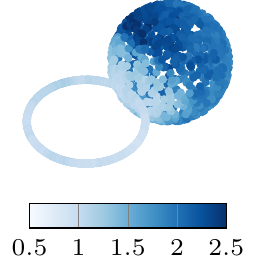}%
  }%
  \subcaptionbox{PID}{%
    \includegraphics[width=0.50\linewidth, page = 2]{Figures/S1vS2_dimensionality}%
  }
  \caption{%
    Dimensionality estimates. PID is more nuanced
    in capturing changes in dimensionality, assigning $\approx 1$ to
    almost all points of the circle, i.e.\ $S^1$, while highlighting
    that points closer to $S^2$ exhibit an increase in dimensionality.
  }
  \label{fig:S1vS2 dimensionality}
\end{figure}

%
In practice, the sample density may not be sufficiently high for
\cref{thm:Intrinsic dimension manifolds} to apply.
This means that there may appear artefact homological features in
dimensions \emph{higher} than the intrinsic dimension of a given space.
We thus recommend to only consider features that exceed a certain persistence
threshold in comparison to the persistence of features of lower
dimension: for any data point~$x$ and the respective intrinsic annulus
$B_r^s(x)$, we suggest to eliminate all topological features whose lifetimes are
smaller than the maximum lifetime of features in one dimension below.
This results in markedly stable estimates of intrinsic dimension, which
are less prone to overestimations.

\subsection{Euclidicity Captures Known Singularities}

Before applying Euclidicity to high-dimensional spaces with unknown
characteristics, we first analyse its behaviour on spaces with known
singularities.
\cref{fig:Overview} shows that Euclidicity is capable of
detecting the singularity of the `pinched torus.' Of particular
relevance is the fact that Euclidicity also highlights that points in
the vicinity of the singular point are \emph{not} fully regular. This is
an important property for practical applications  since it implies that
Euclidicity can detect such \emph{isolated singularities} even in the
presence of sampling errors.

Another prototypical example of singular spaces is given by $S^n \vee
S^n$, the \emph{wedge} of two $n$-dimensional spheres, which is obtained
by gluing them together at one point.
Denoting the gluing point by $x_0$, for a suitable triangulation of
$X = S^n \vee S^n$, this space can be stratified by $X
\supset \{ x_0 \}$.
To assess the utility of \method, we apply it to samples of such wedged
spheres of dimensions $\{2,3,4\}$, calculating their respective
Euclidicity scores.
Since larger intrinsic dimensions require higher sample sizes to
maintain the same density, we start with a sample size of $20000$ in
dimension $2$ and increase it consecutively by a factor of $10$.
We then calculate Euclidicity of $50$~random points in the respective
data set, and additionally for the singular point $x_0$.

\cref{sfig:Euclidicity wedged spheres} shows the results of our
experiments.
We observe that the singular point exhibits a \emph{significantly higher
Euclidicity} score than the random samples. Moreover, we find that
Euclidicity scores of non-singular points exhibit a high degree of
variance across the data, which is caused by the fact that the sampled
data does not perfectly fit the underlying space the points are being
sampled from.
This strengthens our main argument: assessing whether a specific point
is Euclidean does not require a binary decision but a continuous measure
such as Euclidicity.

\paragraph{Stability.}
%
\cref{thm:Stability} predicts that Euclidicity estimates are
stable. Given the use of mini-batches in machine learning, we first
assess whether Euclidicity is \emph{robust towards
sampling}: repeating the calculations for the `pinched torus' over
different batches results in highly similar distributions that are not
distinguishable according to Tukey's range test~\citep{Tukey49} at the
$\alpha = 0.05$ confidence level. Moreover, choosing larger locality scales still
enables us to detect singularities at higher computational costs and
incorporating larger parts of the point cloud. Please refer to
\cref{app:Stability} for a more detailed discussion of this aspect.

\begin{figure}[tbp]
  \centering
  \subcaptionbox{\label{sfig:Euclidicity wedged spheres}}{%
    \begin{tikzpicture}
      \pgfplotsset{%
        boxplot/draw direction = x,
        %
        %
      }
      \begin{axis}[%
        axis x line*     = bottom,
        axis y line*     = none,
        ytick            = {1, 2, 3},
        yticklabels      = {2, 3, 4},
        height           = 3.0cm,
        width            = 0.60\linewidth,
        mark size        = 1pt,
        tick align       = outside,
        ylabel           = {Dimension},
        xticklabel style = {%
          /pgf/number format/fixed,
          /pgf/number format/precision=2,
        },
      ]
        \addplot[boxplot] table[
          col sep = comma,
          y       = dim_3,
        ] {Data/Wedged_spheres_Euclidicity.csv};

        \addplot[boxplot] table[
          col sep = comma,
          y       = dim_4,
        ] {Data/Wedged_spheres_Euclidicity.csv};

        \addplot[boxplot] table[
          col sep = comma,
          y       = dim_5,
        ] {Data/Wedged_spheres_Euclidicity.csv};
      \end{axis}
    \end{tikzpicture}%
  }%
  \subcaptionbox{\label{sfig:Wedged spheres Euclidicity}}{%
    \includegraphics[width=0.33\linewidth]{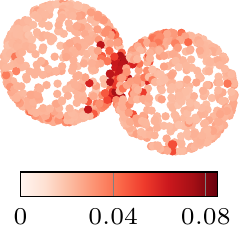}%
  }%
  \caption{%
    \subref{sfig:Euclidicity wedged spheres}:
    Euclidicity scores of wedged spheres for different dimensions. High
    values indicate singular points/neighbourhoods. The Euclidicity of
    the singular point always constitutes a clear positive outlier.
    In 2D, \emph{Euclidicity}~\subref{sfig:Wedged spheres Euclidicity}
    results in a clearly-delineated singular region.
  }
\end{figure}

\paragraph{Comparison to single-parameter approach.}
%
%
We find that our Euclidicity measure also leads to significantly more stable
results than a comparable single-parameter approach for geometry-based
anomaly detection~\citep{Stolz20a}. We analyse this behaviour in more
detail in \cref{sec:Euclidicity versus single-parameter}, observing that 
a single-parameter approach, which examines data with a constant
global scale, results in many `false positives,' i.e.\ points with high
anomaly scores that in fact \emph{do} admit a Euclidean neighbourhood.
This behaviour will only be exacerbated in higher dimensions and sparser
data, highlighting that the use of multiple locality scales by
Euclidicity is advantageous. 

\fcolorbox{black}{lightgrey}{%
  \parbox{\dimexpr\linewidth-2\fboxsep-2\fboxrule}{%
    The preceding experiments verified PID and Euclidicity in terms of
    \emph{expressivity}, i.e.\ detecting singularities in spaces with
    a known ground truth, and \emph{robustness}.
    Hence, our subsequent experiments will proceed with the analysis of
    high-dimensional spaces, exemplifying potential use cases of
    Euclidicity.
  }
}

\subsection{Euclidicity of High-Dimensional Spaces}\label{subsec:MNISTandFashionMNIST}

To test \method in an unsupervised setting, we calculate Euclidicity scores
for the \data{MNIST} and \data{FashionMNIST} data sets, selecting
mini-batches of $1000$ samples from a subsample of $10000$ random images
of these data sets. We `flatten' all greyscale images, thus representing each image
as a high-dimensional point, and a collection of images as a point
cloud, for which we can calculate Euclidicity values. This procedure
follows common practice in dimensionality reduction~\citep{mcinnes2018umap, Moor20c}.
Following \citet{pope2021intrinsic}, we assume an intrinsic dimension of
$10$; moreover, we use $k = 50$ neighbours for local scale estimation.
To ensure that our results are representative, we repeat all
calculations for five different subsamples. Euclidicity scores range
from $[1.1, 5.3]$ for \data{MNIST}, and $[1.3, 5.6]$ for
\data{FashionMNIST}.
The scores of the two datasets appear to be following different
distributions~(see \cref{app:emp_dist} for a visualisation and a 
detailed depiction of the distributions).

\begin{figure}[tbp]
  \centering
  \subcaptionbox{\data{MNIST}\label{sfig:MNIST examples}}{
    \includegraphics[width=0.300\linewidth]{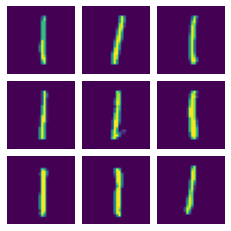}%
    \includegraphics[width=0.300\linewidth]{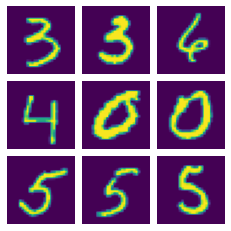}%
    \includegraphics[width=0.300\linewidth]{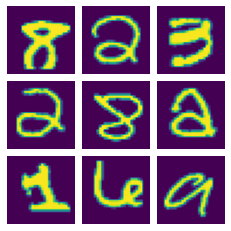}
  }
  \subcaptionbox{\data{FashionMNIST}\label{sfig:FashionMNIST examples}}{
    \includegraphics[width=0.300\linewidth]{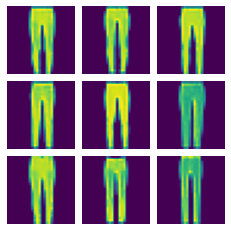}%
    \includegraphics[width=0.300\linewidth]{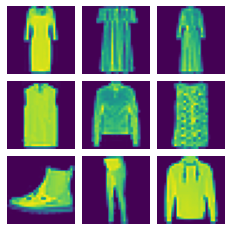}%
    \includegraphics[width=0.300\linewidth]{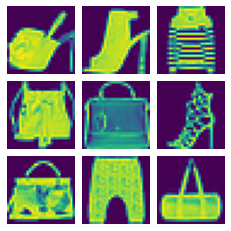}
  }
  \caption{%
    Euclidicity captures local sample complexity in an unsupervised
    manner. From left to right: sample images exhibiting low, median,
    and high Euclidicity, respectively.
  }
  \label{fig:Vision data sets examples}
\end{figure}

\cref{fig:Vision data sets examples} shows a selection of~$9$ images,
corresponding to the lowest, median, and highest Euclidicity scores,
respectively.
We observe that high Euclidicity scores correspond to images with a high
degree of non-linearity, whereas low Euclidicity scores correspond to
images that exhibit less complex structures: for \data{MNIST}, these are
digits of `1.'
Interestingly, we observe the same phenomenon for
\data{FashionMNIST}, where images with low Euclidicity~(`pants') possess
less geometric complexity in contrast to images with high Euclidicity.
Since low Euclidicity can also be seen as an indicator of how close
a neighbourhood is to being \emph{locally linear}, this finding
hints at the existence of simple substructures in the data, prompting
the use of Euclidicity as an unsupervised measure of geometric
complexity.

\begin{figure}[tbp]
  \centering
  \subcaptionbox{Labels}{%
    \includegraphics[width=0.475\linewidth, page = 2]{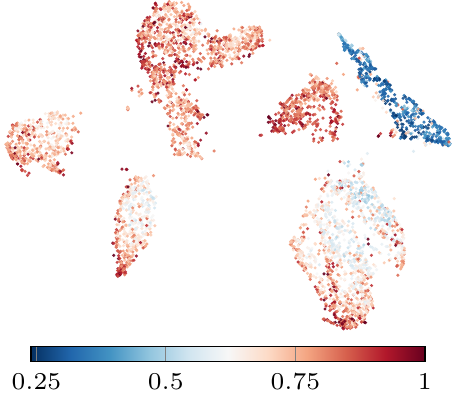}%
  }%
  \quad%
  \subcaptionbox{Euclidicity scores}{%
    \includegraphics[width=0.475\linewidth, page = 1]{Figures/MNIST_Euclidicity}%
  }
  \caption{%
    A UMAP~\citep{mcinnes2018umap} embedding of the \data{MNIST} data,
    showing Euclidicity scores and class labels.
  }
  \label{fig:MNIST embedding}
\end{figure}

To analyse the relationship between geometric complexity and
Euclidicity, we focus on \data{MNIST}.
\cref{fig:MNIST embedding} shows an embedding of the data, with labels
and \emph{normalised Euclidicity scores}~(by the maximum)
highlighted~(Euclidicity was only calculated on the raw data; the
embedding is just used for visualisation purposes).
We find that low Euclidicity scores are prevalent in clusters of `1's,
whereas `5's are assigned both lower than average and higher than average
Euclidicity scores~(similar patterns hold for other classes).
This lends credence to considering \data{MNIST} to consist of
a \emph{union of manifolds}, which are, however, not necessarily split
along the different classes of digits~(meaning that, as our scores
indicate, even images having the same label may not `live' on the same
manifold).

To further assess this hypothesis, we analyse the connection between
a classifier's ability to correctly classify a given sample from the
test set, and its corresponding normalised Euclidicity score. To this
end, we trained a simple neural network classifier for both \data{MNIST} and
\data{FashionMNIST}, and subsequently compared Euclidicity scores of
correctly and incorrectly classified samples. We note that the misclassified
samples admit Euclidicity scores that are significantly higher than
a random equally-sized sample of the correctly classified images.
For \data{MNIST}, the average Euclidicity of correctly classified
samples is $0.33$, whereas misclassified samples admit an average score of
$0.39$. The respective averages for \data{FashionMNIST} are $0.31$ and
$0.35$, respectively.
Welch's t-test indicates significance of these results; please refer to
\cref{app:emp_dist} for details on the model architecture and
statistical significance.
This insight encourages future uses of Euclidicity in models to improve
or explain classification issues, either as an unsupervised
preprocessing step, or by incorporating Euclidicity into the model
architecture itself via appropriate loss terms.

\subsection{Euclidicity of Cytometry Data}\label{sec:Euclidicity iPSC}

To highlight the utility of Euclidicity in unsupervised representation
learning, we also calculate it on an induced pluripotent stem cell~(iPSC)
reprogramming data set~\citep{Zunder15}.
The data set consists of 33 variables and around 220,000 samples; it
depicts a progression of so-called \emph{fibroblasts} diverging, and
splitting into two different lineages.
The data set is known to contain branching structures~\citep{Bhaskar22a}
that can best be extracted using PHATE~\citep{Moon19}, a non-linear
dimensionality reduction algorithm.
\ul{We only employ this algorithm for visualisation purposes; all
Euclidicity calculations are performed on the original data}.
We use \texttt{twoNN} to estimate dimensionality, obtaining a mean
intrinsic dimension of~16, and select parameters as described in
\cref{subsec:MNISTandFashionMNIST}.

\begin{figure}[tbp]
  \centering
  \subcaptionbox{10,000 points\label{sfig:iPSC}}{%
    \includegraphics[width=0.33\linewidth]{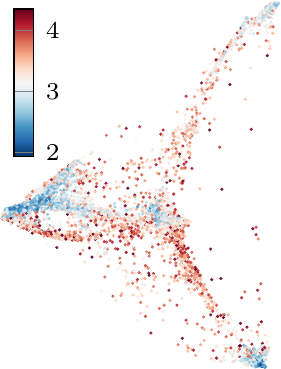}%
  }%
  \subcaptionbox{5,000 points}{%
    \includegraphics[width=0.33\linewidth]{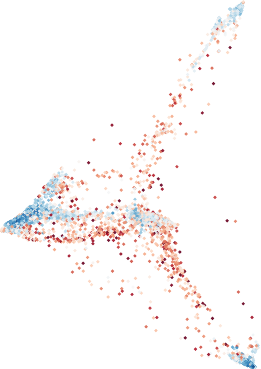}%
  }%
  \subcaptionbox{1,000 points}{%
    \includegraphics[width=0.33\linewidth]{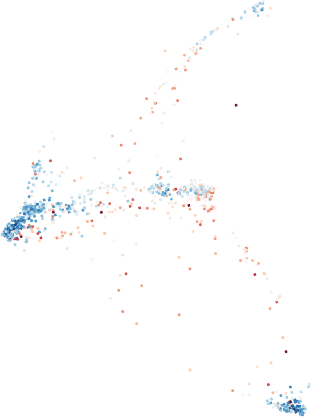}%
  }%
  \caption{%
    PHATE embeddings of a cytometry data set, with colours based on
    Euclidicity. Euclidicity highlights dense non-singular regions and
    remains stable under subsampling the iPSC data set.
    Minor variations in the point cloud shape are due to the PHATE
    embedding algorithm; Euclidicity was always calculated on the raw data.
 }
 \label{fig:iPSC subsampling}
\end{figure}

\cref{sfig:iPSC} shows an embedding obtained via PHATE and
the Euclidicity scores of the original data. We find that high
Euclidicity scores occur in regions that exhibit a lower
density in the PHATE embedding; such points turn out to be situated in
lower-dimensional subspaces.\footnote{%
  However, low-density regions in the PHATE visualisation
  need not necessarily correspond to low-density regions in the original
  dataset.
}
We verify this fact using the \texttt{twoNN} dimensionality estimates
depicted in \cref{fig:iPSC dimensionality}.
More specifically, we calculated the intrinsic dimension for the
subsample, observing that the interquartile range for the 1,000 points
with \emph{highest Euclidicity} is around 12--14, whereas the
interquartile range of the 1,000 \emph{lowest Euclidicity} points ranges
between around 13--16.
Again, we used the \texttt{twoNN} algorithm for intrinsic dimensionality
estimates~(using $k = 50$ nearest neighbours).
Since lower-dimensional points in a space can be
considered \emph{singular} in the sense of stratified spaces, this is
further evidence for Euclidicity to be a useful tool for detecting
non-manifold regions in data. 
Finally, we remark that our Euclidicity estimates remain stable under
\emph{subsampling}.
\cref{fig:iPSC subsampling} depicts subsamples of different sizes
for which we calculated Euclidicity~(on the raw data, respectively,
using PHATE to obtain embeddings). Euclidicity distributions remain
stable and the same phenomena are highlighted for each subsample. 

This analysis constitutes a proof of concept, showcasing how to use
Euclidicity to `probe' data sets and enrich general unsupervised data
analysis workflows.

\begin{figure}[tbp]
  \centering
  \begin{tikzpicture}
    \pgfplotsset{%
      boxplot/draw direction = x,
    }
    \begin{axis}[%
      axis x line*     = bottom,
      axis y line*     = none,
      ytick            = {1, 2},
      yticklabels      = {High, Low},
      height           = 3.0cm,
      width            = \linewidth,
      mark size        = 1pt,
      tick align       = outside,
      xlabel           = {Intrinsic dimension},
      ylabel           = {Euclidicity},
      xticklabel style = {%
        /pgf/number format/fixed,
        /pgf/number format/precision=2,
      },
    ]
      \addplot[boxplot] table[y index = 0] {Data/iPSC_ID_high_Euclidicity.txt};
      \addplot[boxplot] table[y index = 0] {Data/iPSC_ID_low_Euclidicity.txt};
    \end{axis}
  \end{tikzpicture}%
  \caption{%
    A comparison of intrinsic dimension estimates computed for points in
    the iPSC dataset that admit low~(top) and high~(bottom) Euclidicity
    scores.
  }
  \label{fig:iPSC dimensionality}
\end{figure}
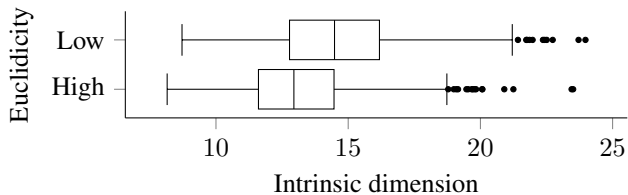

\section{Discussion}

We present \method, a framework for locally estimating
\begin{inparaenum}[(i)]
  \item the intrinsic dimension via PID, the \emph{persistent intrinsic
    dimension}, and
  \item the `manifoldness' via \emph{Euclidicity}, a multi-scale measure of the
    deviation from Euclidean space of point clouds.
\end{inparaenum}
Our method is based on a novel formulation of persistent local homology
as a multi-parameter approach, and we provide theoretical guarantees for
it in a dense sample setting.
Our experiments showed significant improvements of stability compared to
geometry-based anomaly detection methods with fixed locality scales, and
we found that Euclidicity can detect singular regions in
data sets with known singularities. Moreover, using high-dimensional
benchmark and real-world data sets, we also find that Euclidicity
can serve as an unsupervised measure of geometric complexity.

\paragraph{Future work.}
%
For future work, we envision two relevant research directions.
First and foremost will be the inclusion of Euclidicity into
machine learning models to make them `singularity-aware.'
In light of our experiments in \cref{subsec:MNISTandFashionMNIST} and
\cref{sec:Euclidicity iPSC}, we believe that Euclidicity will be
particularly useful in unsupervised scenarios, or provide an additional
weight in classification settings~(to ensure that singular examples are
being given lower confidence scores). Moreover, Euclidicity could be
used in the detection of adversarial samples---a task for which
knowledge about the underlying topology of a space is known to be
crucial~\citep{Jang20a}.
As a second direction, we want to further improve the properties of
Euclidicity itself. To this end, we plan to investigate if incorporating
custom distance measures for three-parameter persistence modules, i.e.\
different metrics for \cref{eq:Euclidicity implementation}, lead to
improved results in terms of stability, expressivity, and computational
efficiency.
Moreover, we hypothesise that replacing the Vietoris--Rips filtration
will prove beneficial in reducing the number of samples for calculating
Euclidicity~\citep{Anai20a, Sheehy13a}.
Along these lines, we also plan to derive theoretical results that
relate specific filtrations and the expressivity of the corresponding
Euclidicity measure. Another direction for future research concerns the
approximation of a manifold from inherently singular data, i.e.\ finding
the \emph{best} manifold approximation to a given data set with
singularities. This way, singularities could be resolved during
the training phase of models, provided an appropriate loss
function exists. Euclidicity may thus serve as a metric for assessing
data sets, paving the way towards more trustworthy and faithful
embeddings.

\section*{Acknowledgements} 

B.R.\ is supported by the Bavarian state government with funds from the
\emph{Hightech Agenda Bavaria}.
The authors sincerely thank Francesco Conti for stimulating
discussions.
Moreover, the authors are grateful for the stimulating discussions with
the anonymous reviewers. In particular the comments by reviewer
\texttt{By5N} helped us in refining and positioning our method better
in the context of other unsupervised machine learning methods.

\section*{Reproducibility} 
%
Our code is available under \url{https://github.com/aidos-lab/TARDIS}.
All dependencies are listed in the respective \texttt{pyproject.toml}
file, and the \texttt{README.md} discusses how to install our package
and run our experiments.

\bibliography{main}
\bibliographystyle{icml2023}

\appendix
\onecolumn

\section{Appendix}

\subsection{Notation}\label{app:Notation}

\begin{center}
  \begin{tabular}{@{}ll}
    \toprule
    Symbol & Meaning\\
    \midrule
    $\epsilon$ & local annulus scale parameter\\
    $\reals$ & real numbers \\
    $H_i$ & $i$th (ordinary) homology functor (with $\mathds{Z}/2 \mathds{Z}$ coefficients) \\
    $\tilde{H}_i$  & $i$th reduced homology functor (with $\mathds{Z}/2 \mathds{Z}$ coefficients) \\
    $\inf$ & infimum \\
    $\sup$ & supremum \\
    $|\cdot|_\infty$ & uniform~(infinity) norm \\  
    $n$ & intrinsic dimension of the space under consideration \\
    $N$ & ambient dimension of the space under consideration \\
    $\varinjlim$ & (categorical) colimit \\
    $S^k$ & $k$-dimensional sphere \\
    $c^{\circ}X:= X \times (0,1]/X \times \{ 1\}$ & open cone of
    a topological space $X$\\
    \bottomrule
  \end{tabular}
\end{center}

\subsection{Proofs of the Main Statements in the Paper}\label{app:Main proofs}

We restate the theorems from the main paper for the convenience of
readers, along with their proofs, which were removed for space
reasons.
We first prove the stability theorem, first stated on
\cpageref{thm:Stability} in the main text, which shows that our method
enjoys stability properties with respect to radius changes of the
intrinsic annuli.
\thmstability*
\begin{proof}
  The Hausdorff distance of two non-empty subsets $A,B \subset \X$ is
  $\distH(A,B) := \inf \{ \epsilon \geq 0 \mid A \subset B_\epsilon, B \subset A_\epsilon \}$,
  where $A_\epsilon = \cup_{a \in A} \{ x \in \X ;\ \dist(x,a) \leq
  \epsilon \}$ denotes the $\epsilon$-thickening of $A$ in $X$.
  Set $\epsilon:= \max \{ \epsilon_1, \epsilon_2 \}$.
  By assumption, $B_r^{s}(x) \subset
  B_{r^{\prime}}^{s^{\prime}}(x)_\epsilon$ and
  $B_{r^{\prime}}^{s^{\prime}}(x) \subset B_r^{s}(x)_\epsilon$, i.e.\
  $\distH(B_r^{s}(x),B_{r^{\prime}}^{s^{\prime}}(x)) \leq \epsilon$.
  Using the geometric stability theorem of persistence diagrams~\citep{chazal2014persistence},
  we have $\frac{1}{2} \distB(\diagram,\diagram^{\prime}) \leq \distH(B_r^{s}(x), B_{r'}^{s'}(x))$, which proves the claim.
\end{proof}

Next, we prove that our \emph{persistent intrinsic dimension}~(PID)
measure is capable of capturing the dimension of manifolds correctly,
provided sufficiently many samples are present. This theorem was first
stated on \cpageref{thm:Intrinsic dimension manifolds}.
\thmdimension*
\begin{proof}
Let $x \in \X$ be an arbitrary point. Since $M$ is a manifold, $x$ admits a Euclidean neighbourhood~$U$.
Since $M$ is smooth, we can assume $U$ to be arbitrarily close to being flat by shrinking it.
Thus, we can find $\epsilon_2>0$ with $B_r^s(x) \subset U$ for all $r,s< \epsilon_2$ such that $\Hom_i(\vietoris(B_r^s(x),t))=0$ for all $i \geq n$, and all $t$.
Hence, $\PH_i(\vietoris(B_r^s(x),\bullet))=0$ for all $i \geq n$, and therefore $i_x(\epsilon_2) \leq n$.
By contrast, for $S$ sufficiently large, and $r,s$ as before, there exists a parameter $t$ such that $\vietoris(B_r^s(x),t)$ is homotopy-equivalent to an $(n-1)$-sphere, and so $\Hom_{n-1}(\vietoris(B_r^s(x),t))$ admits a generator, i.e.\ it is non-trivial.
Consequently, $\PH_{n-1}(\vietoris(B_r^s(x),\bullet)) \neq 0$, and $i_x(\epsilon_2)=n$.
By further increasing $S$, we can ensure that the statement still holds when we decrease $\epsilon_2$, which proves the two remaining claims.
\end{proof}

Finally, we prove that \emph{Euclidicity}, an in particular its
approximation in our implementation, i.e.\ \cref{eq:Euclidicity
implementation}, is consistent when dealing with manifolds in that it
correctly assigns every point an infinitesimally small Euclidicity,
provided a sufficient number of samples is available. This theorem was
first stated on \cpageref{thm:Manifold consistency}.
\thmmanifolds*
\begin{proof}
  Since $M$ is a manifold, every point $x$ admits a Euclidean neighbourhood $U$.
  Moreover, as $M$ is smooth, we may assume~$U$ to be arbitrarily
  close to being flat~(by potentially shrinking it).
  Thus, we can find an $s_{\epsilon}$ with the desired properties such
  that
  \begin{inparaenum}[(i)]
    \item $B_0^{s_{\epsilon}}(x) \subset U \cap \X$, and
    \item $\distH(B_0^{s_{\epsilon}}(x),\EucB_0^{s_{\epsilon}}(x)) \leq \frac{\epsilon}{2}$,
  \end{inparaenum}
  where the latter condition can be achieved by increasing
  $S$ if necessary.
  Consequently, $\distH(B_r^{s}(x),\EucB_r^{s}(x))
  \leq \frac{\epsilon}{2}$ for \emph{any} $0 \leq r \leq s \leq
  s_{\epsilon}$, and by making use of the geometric stability
  theorem~\citep{chazal2014persistence}, the right-hand side of
  \cref{eq:Euclidicity implementation} reduces to
  \begin{equation*}
        \frac{1}{C} \sum_{(r, s) \in \Gamma}
  \distB\mleft(
    \PH_i\mleft(\vietoris\mleft(B_{r}^{s}\mleft(x\mright),     \bullet\mright)\mright),
    \PH_i\mleft(\vietoris\mleft(\EucB_{r}^{s}\mleft(x\mright), \bullet\mright)\mright)
  \mright) 
  \leq \frac{2}{C} \sum_{(r, s) \in \Gamma} \distH(B_r^{s}(x),\EucB_r^{s}(x)) \leq \epsilon
    \end{equation*}
    for any grid $\Gamma$ with the properties specified in the statement. This concludes the proof.
\end{proof}

\subsection{Local Homology}\label{app:Proofs}

Local homology quantifies topological properties of
infinitesimal small neighbourhoods of a fixed point.
For a topological space $X$ and $x \in X$, its $i$th local homology
group is defined as the categorical colimit $\Hom_i(X,X \setminus x) := \varinjlim_U \Hom_i(X,X
\setminus U)$, where the direct system is given by the induced maps on
homology that arise via the inclusion of~(small) neighbourhoods of
$x$. Heuristically, a local homology class can be thought of as a homology class of an infinitesimal small punctured neighbourhood of a point.
When $X$ is a simplicial complex, we may view~$x$ as a vertex in $X$,
using subdivision if necessary.
Its \emph{star}~$\St(x)$ is defined to be the union of simplices in $X$
that have $x$ as a face, whereas its \emph{link} $\Lk(x)$ consists of
all simplices in $\St(x)$ that do not have $x$ as a face.
By the excision axiom for homology, we have 
\begin{equation}
  \Hom_i(X,X \setminus x) \cong \Hom_i(\St(x),\St(x) \setminus x).
\end{equation}
Since $\St(x)$ is \emph{contractible}, the long exact reduced homology
sequence of the pair $(\St(x), \St(x) \setminus x)$ records exactness
of
\begin{align*}
0 = \tilde{\Hom}_i(\St(x)) \rightarrow \Hom_i(\St(x),\St(x) \setminus x) \rightarrow \tilde{\Hom}_{i-1}(\St(x) \setminus x) \rightarrow \tilde{\Hom}_{i-1}(\St(x)) = 0
\end{align*}
for all $i$, and therefore $\Hom_i(\St(x),\St(x) \setminus x) \cong \tilde{\Hom}_{i-1}(\St(x) \setminus x)$.
By noting that $\St(x)
\setminus x$ deformation retracts to $\Lk(x)$, and by summarising the previous observations, we obtain
\begin{equation}
  \Hom_i(X,X \setminus x) \cong \tilde{\Hom}_{i-1}(\Lk(x)).
  \label{eq:Local homology}
\end{equation} 
The \textbf{key takeaway} here is that the homology of $\Lk(x)$ will
usually differ from the homology of a sphere, once $\Lk(x)$ is not
homotopy-equivalent to a sphere.
For example, when $x$ is an isolated singularity in a stratified
simplicial complex $X$ of dimension $n$, then its distinguished
neighbourhood is given by $U \cong c^{\circ}L$.
Thus, $\Lk(x) = L$ and $\Hom_i(X,X \setminus x)
= \tilde{\Hom}_{i-1}(L)$ by \cref{eq:Local homology}, which
is usually different from $\tilde{\Hom}_{i-1}(S^{n-1})$, when $x$ does
not admit a Euclidean neighbourhood in the appropriate dimension. In particular, points in a stratified simplicial complex that lie in strata of different dimensions can be distinguished in such a way, since homology of a sphere $S^k$ records precisely one non-trivial generator in homology degree $k$.
This observation motivates and justifies using  local homology for detecting non-Euclidean neighbourhoods, and serves as the primary motivation for our
\emph{Euclidicity} measure in \cref{sec:Euclidicity}.

\subsection{Pseudocode}\label{app:Pseudocode}

We provide brief pseudocode implementations of the algorithms discussed
in \cref{sec:Methods}. In the following, we use $\#\operatorname{Bar}_i(\X)$
to denote the number of $i$-dimensional persistent barcodes of
$\X$~(w.r.t.\ the Vietoris--Rips filtration, but any other choice of
filtration affords the same description).
\cref{alg:PID} explains the calculation of \emph{persistent intrinsic
dimension}~(see \cref{sec:Persistent intrinsic dimension} in the main
paper for details). For the subsequent algorithms, we assume that the
estimated dimension of the intrinsic dimension of the data is~$n$. We
impose no additional requirements on this number; it can, in fact, be
obtained by any choice of intrinsic dimension estimation method.
As a short-hand notation, for
  $p_i=\PH_{n-1}\mleft(\vietoris\mleft(\EucB_{\bullet}^{\bullet}(x),\bullet\mright)\mright)$
w.r.t.\ some sample of $\{ y \in \reals^{n} \mid r \leq \dist(x, y) \leq s \}$,
we denote by $p_i^{r,s}=\PH_{n-1}\mleft(\vietoris\mleft(\EucB_{r}^{s}(x),\bullet\mright)\mright)$
the respective fibred persistent local homology barcode~(calculated
w.r.t.\ the same sample). \cref{alg:Euclidicity} then shows how to
calculate the \emph{Euclidicity} values, following \cref{eq:Euclidicity}
and one of its potential implementations, given in \cref{eq:Euclidicity
implementation}.

\begin{algorithm}[tbp]
  \caption{%
    An algorithm for calculating the \emph{persistent intrinsic dimension}~(PID)
  }
  \label{alg:PID}
  \begin{algorithmic}[1]
    \REQUIRE $x \in \X$, $s_{\max}$, $\ell$.
    \FOR[Iterate over the parameter grid]{$s \in \Gamma$}
    \STATE $i_x(s) \gets 0$
    \FOR{$r < s \in \Gamma$}
     \FOR{$i=1,\dots, N-1$}
       \STATE \textbf{Calculate $\#\operatorname{Bar}_i(B_r^s(x))$}
       \IF{$\#\operatorname{Bar}_i(B_r^s(x))>0$}
         \STATE $i_x(s) \gets i+1$
       \ENDIF
     \ENDFOR
    \ENDFOR
    \STATE \textbf{return} $i_x(s)$
    \ENDFOR
  \end{algorithmic}
\end{algorithm}

\begin{algorithm}[tbp]
  \caption{%
    An algorithm for calculating the \emph{Euclidicity values} $\delta_{jk}$
  }
  \label{alg:Euclidicity}
  \begin{algorithmic}[1]
    \REQUIRE $x \in \X$, $s_{\max}$, $\ell$, $n$, $\{p_1, \dotsc, p_m\}$.
    \FOR{$j=1,\dots, m$}
      \FOR{$k=j + 1,\dots, m$}
        \FOR{$s \in \Gamma$}
          \FOR{$r \in \Gamma, r<s$}
            \STATE \textbf{Calculate $\distB(p_j^{r,s},p_k^{r,s})$} \COMMENT{Calculate bottleneck distance}
            \STATE \textbf{return} $\distB(p_j^{r,s},p_k^{r,s})$
          \ENDFOR
        \ENDFOR
        \STATE \textbf{Calculate $\Dist(p_j,p_k)$} \COMMENT{Evaluate \cref{eq:Euclidicity implementation}}
        \STATE \textbf{return} $\Dist(p_j,p_k)$
      \ENDFOR
    \ENDFOR
  \end{algorithmic}
\end{algorithm}

\subsection{Stability of Euclidicity Estimates}\label{app:Stability}

\begin{figure}[tbp]
  \centering
  \includegraphics{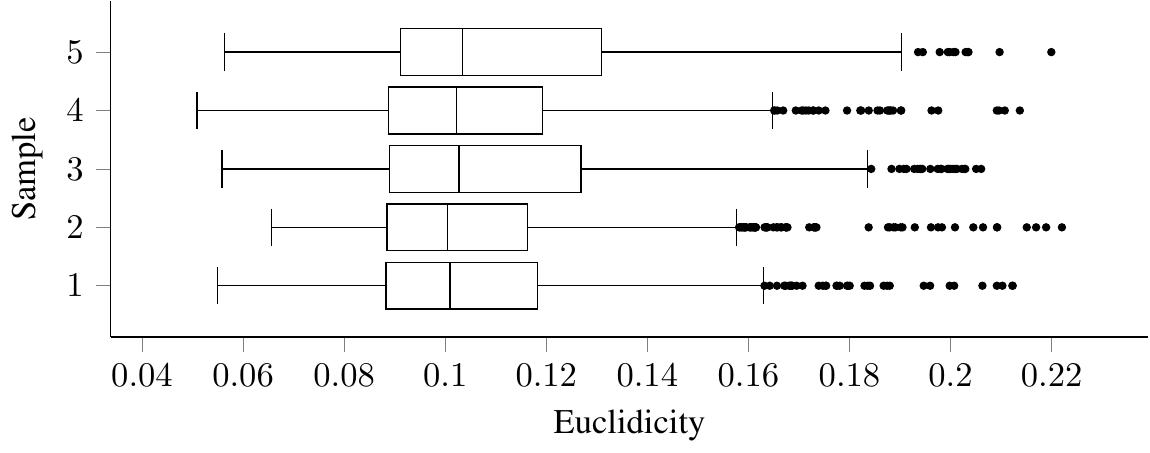}%
  \caption{%
    Boxplots of the Euclidicity values of different random samples of
    the `pinched torus' data set. While each sample invariably exhibits
    some degree of geometric variation,  we are able to
    reliably identify the singularity and its neighbourhood.
  }
  \label{fig:Pinched torus samples}
\end{figure}

\begin{figure}[tbp]
  \centering
  \subcaptionbox{$s_{\max} = 0.55$}{%
    \includegraphics {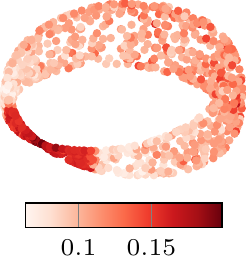}%
  }\,
  \subcaptionbox{$s_{\max} = 0.65$}{%
    \includegraphics {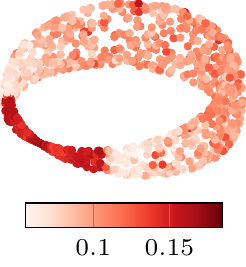}%
  }\,
  \subcaptionbox{$s_{\max} = 0.75$}{%
    \includegraphics {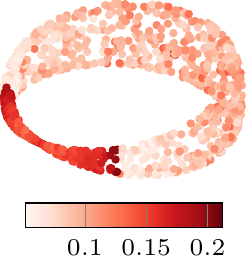}%
  }%
  \caption{%
    Modifying the outer radius $s_{\max}$ still enables us to detect the
    singularity of the `pinched torus.' Larger radii, however,
    progressively increase the field of influence of our method, thus
    starting to assign high Euclidicity values to larger regions of the
    point cloud.
  }
  \label{fig:Pinched torus outer radius point clouds}
\end{figure}

\begin{figure}[tbp]
  \centering
  \includegraphics{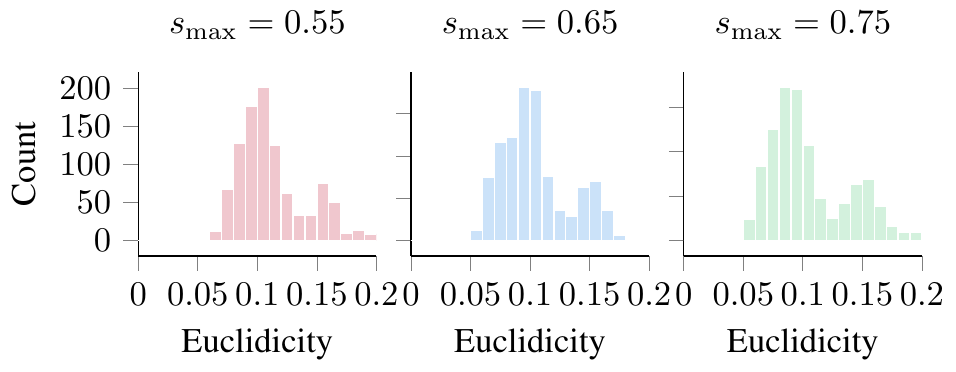}%
  \caption{%
    Histograms of the Euclidicity values for the point clouds shown in \cref{fig:Pinched torus outer radius point clouds}.
    Larger radii result in the distribution accumulating more
    probability mass at higher Euclidicity values, making the
    singularity detection procedure less local~(but still succeeding in
    detecting the singularity and its environs).
  }
  \label{fig:Pinched torus outer radius histograms}
\end{figure}

\begin{figure}[tbp]
  \centering
  \includegraphics{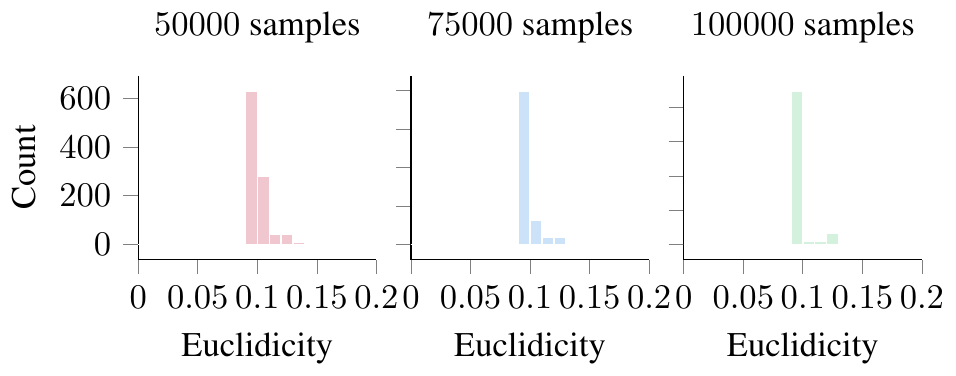}%
  \caption{%
    Histograms of the Euclidicity values for varying point sample densities. 
    Each diagram represents Euclidicity values of $1000$ random samples in 
    the respective data space.
  }
  \label{fig:Pinched torus density}
\end{figure}

\cref{fig:Pinched torus samples} shows that Euclidicity is robust under
sampling; repeating the calculations for smaller batches of the `pinched
torus' data set~($500$ points each) still lets us detect the singularity
and its neighbours reliably. This robustness is an important property in
practice where we are dealing with samples from an unknown data set
whose shape properties we want to capture. Euclidicity enables us to
perform these calculations in a robust manner.
Following the brief discussion in \cref{sec:Parameter selection}, we
show the results of varying $s_{\max}$, the outer radius of the local
annulus, for the `pinched torus' data set. \cref{fig:Pinched torus outer
radius point clouds} depicts point clouds of $1000$ samples; we observe
that the singularity, i.e.\ the `pinch point,' is always detected. For
larger radii, however, this detection becomes progressively more
\emph{global}, incorporating larger parts of the point cloud.
\cref{fig:Pinched torus outer radius histograms} depicts the
corresponding histograms; we observe the same shift in probability mass
towards the tail end of the distribution. For extremely large annuli, we
estimate that we lose a clear distinction between singular values and
non-singular values. Our data-driven parameter selection procedure is
thus to be preferred in practice since it incorporates data density.
However, even for markedly varying densities, we find that Euclidicity leads
to reasonably stable results, as can be seen in \cref{fig:Pinched torus density}.
Here, we calculated Euclidicity scores for the pinched torus for varying 
data sample densities of $50000,75000$ and $100000$ samples. Each diagram 
shows the histogram of Euclidicity values for $1000$ random samples in the respective 
data space. Although the histograms differ, the distributions exhibit
the same overall characteristics.

\subsection{Comparison of PID With Other Dimension Estimates}\label{app:PID}

In order to assess the quality of PID, we decided to test its
performance on a space that is both singular and has non-constant
dimension. The data space we chose consists of 2000 samples of $S^1 \vee
S^2$, i.e.\ a $1$-sphere glued together with a $2$-sphere at a certain
concatenation point. We then applied the PID procedure for a maximum
locality scale that was given by the $k$ nearest neighbour distances, for
$k \in \{25, 50, 75, 100, 125, 150, 175, 200\}$. We assigned to
each point the average of the PID scores at the respective scales that
are less than or equal to the $k$ nearest neighbour bound.
Subsequently, we compared the results with other local dimension
estimates for the respective number of neighbours. The methods that were
chosen for comparison include \texttt{lpca}, \texttt{twoNN},
\texttt{KNN}, and \texttt{DANCo}; we used the respective implementation
from the \texttt{scikit-dimension} Python
package.\footnote{\url{https://scikit-dimension.readthedocs.io/en/latest/}}

\begin{figure}[tbp]
  \centering
  \subcaptionbox{PID\label{fig:PID_S1vS2}}{%
    \includegraphics[width=0.50\textwidth]{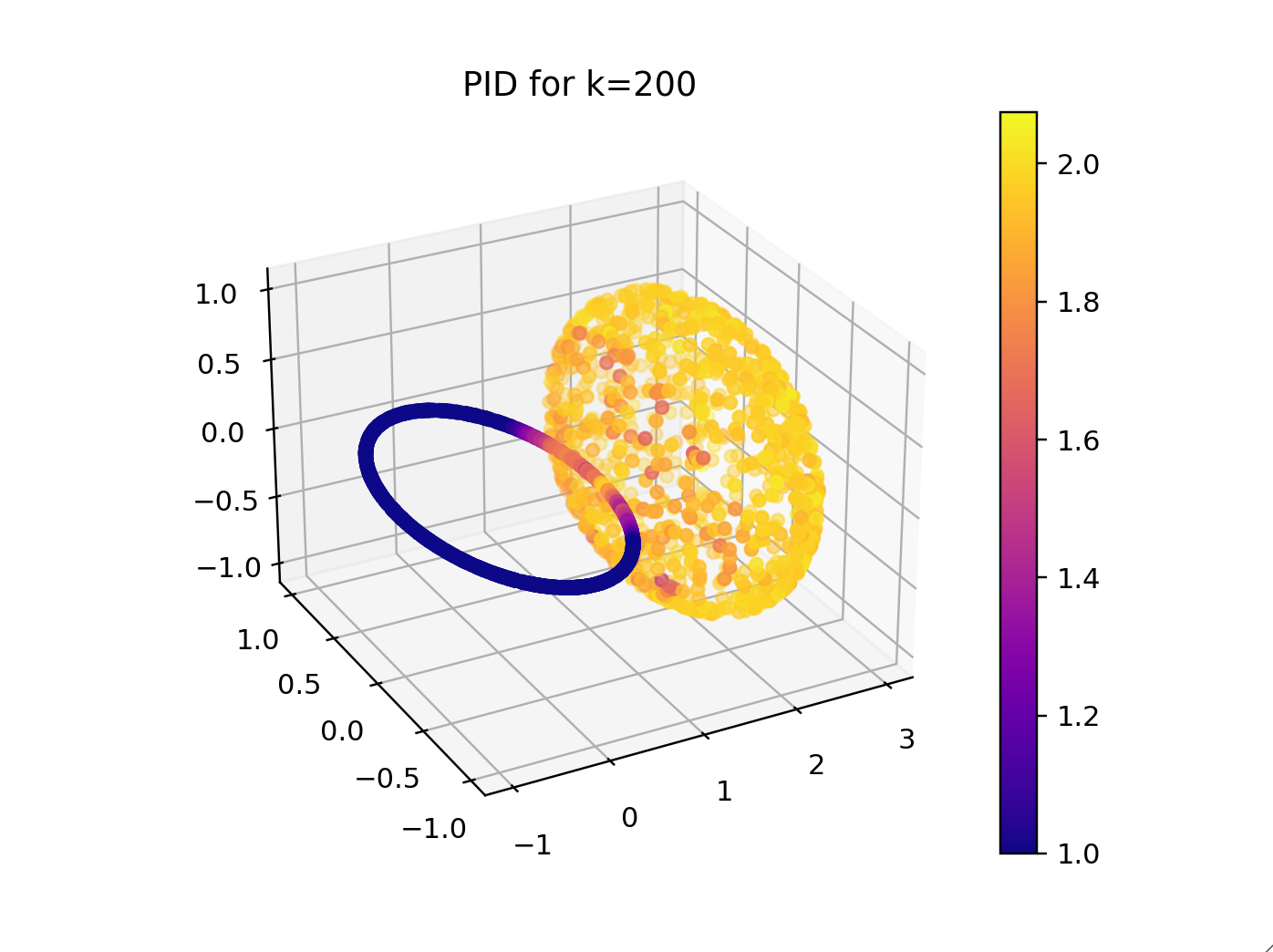}%
  }%
  \subcaptionbox{twoNN\label{fig:twoNN_S1vS2}}{%
    \includegraphics[width=0.50\textwidth]{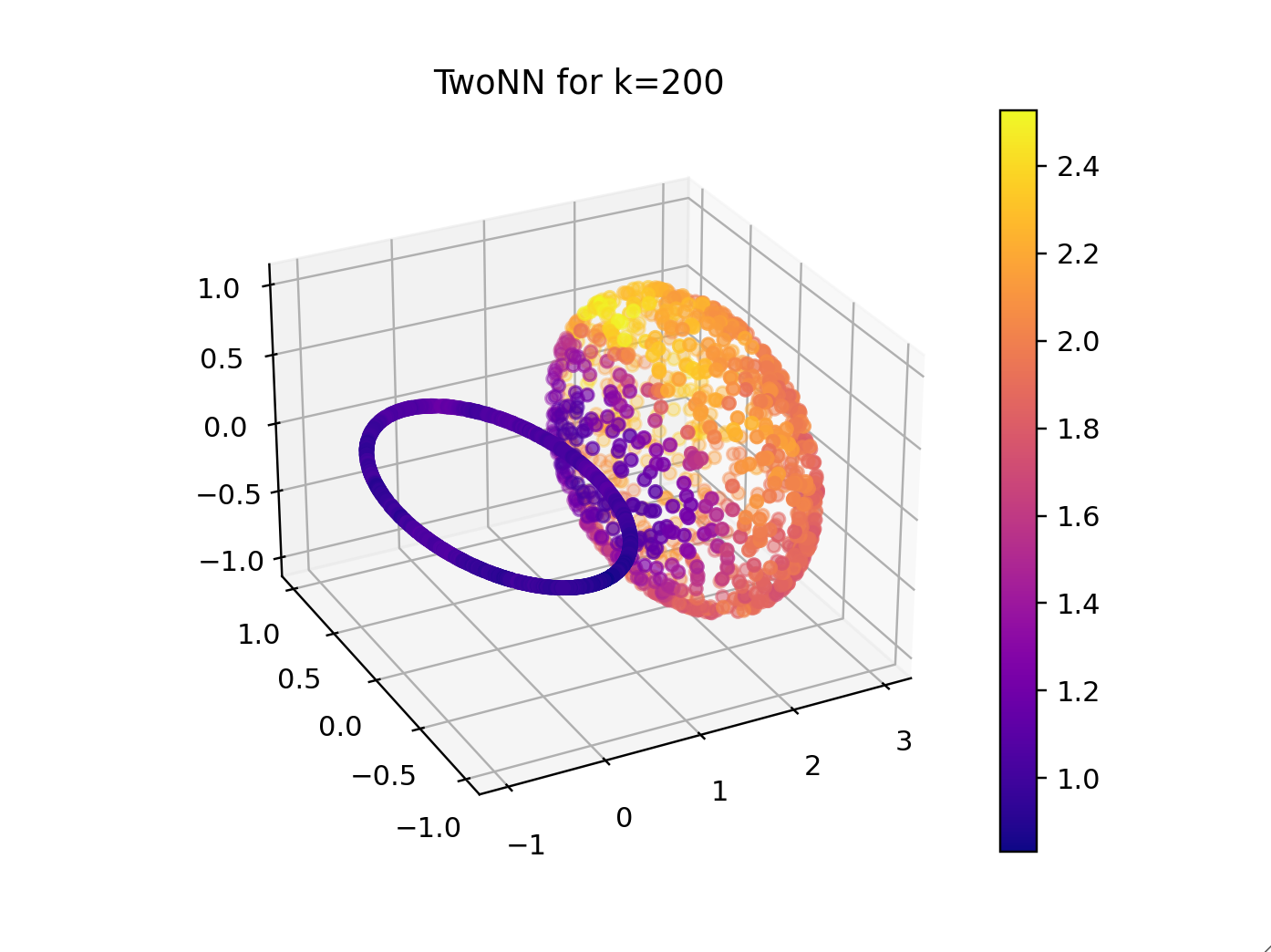}%
  }
  \caption{
    Even for large values of~$k$, PID still does not overestimate the
    local dimensionality of the data, exhibiting a clear distinction
    between the circle and the sphere, respectively.
  }
  \label{fig:app:PID vs twoNN}
\end{figure}

\cref{fig:PID_S1vS2} shows the PID results for a maximum locality scale
of $200$ neighbours, with colours showing the estimated dimension
values for each point. Overall, the correct intrinsic dimension is
detected for most of the points. However, points that lie close to the
singular point show a PID value between~$1$ and $2$. Similarly to what we
already discussed for Euclidicity, PID should therefore also be
interpreted as a measure that incorporates the intrinsic dimension of
a point on \emph{multiple scales} of locality. For real-world data, the
dimension will generally change when changing the locality scale.
However, since there is no canonical choice of scale, we believe that
any such scale provides valuable information about the intrinsic
dimension that is worth being measured. We therefore argue that
a multi-scale approach like ours is appropriate in practice, especially
in a regime that is agnostic with respect to the underlying intrinsic
dimension.
By contrast, \cref{fig:twoNN_S1vS2} shows the corresponding dimension
estimates for twoNN, where we observe less stable and reliable results
across the dataset.

\begin{figure}[tbp]
  \centering
  \subcaptionbox{$1$-sphere dimension estimates\label{fig:dim_estimates_1D}}{%
    \includegraphics[width=0.5\textwidth]{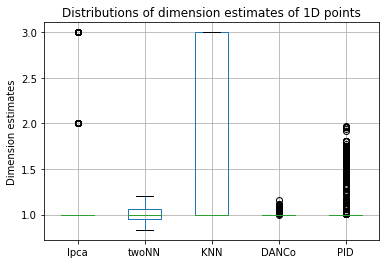}%
  }%
  \subcaptionbox{$2$-sphere dimension estimates\label{fig:dim_estimates_2D}}{%
    \includegraphics[width=0.5\textwidth]{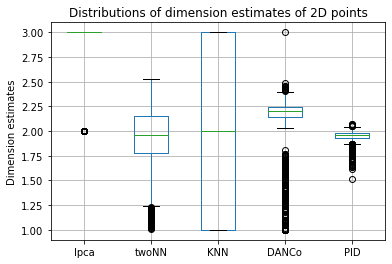}%
  }
  \caption{%
    Estimates of the local intrinsic dimension for points that are close
    to the 1D-sphere, i.e.\ the circle, or the 2D-sphere, respectively.
  }
\end{figure}

\cref{fig:dim_estimates_1D} shows boxplots of the distributions of the
dimension estimates, for all points that lie on the 1D-sphere. We see
that for PID, the mass is concentrated at a value of $1$. Although there
are outliers present, these correspond to points that are close to the
singularity, as it was expected. We note that other methods like
\texttt{KNN} and \texttt{lpca} might highly overestimate the dimension,
and that the interquartile range is significantly higher for
\texttt{twoNN} and \texttt{KNN}.
\cref{fig:dim_estimates_2D} shows the same distributions for the points
that lie on the 2D-sphere. Again, \texttt{lpca} highly overestimates the
dimension since the median lies at a value of $3$. Again, the
interquartile range of PID is the tightest, and the estimates are
closest to the ground truth. Moreover, the lower-value outliers again correspond
to points that are close to the singular gluing point.

\begin{figure}[tbp]
  \centering
  \subcaptionbox{\label{fig:dimension_estimates_diff_k_1D}}{%
    \includegraphics[width=0.5\textwidth]{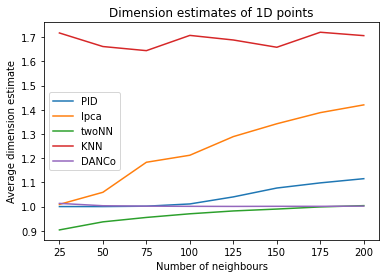}%
  }%
  \subcaptionbox{\label{fig:dimension_estimates_diff_k_2D}}{%
    \includegraphics[width=0.5\textwidth]{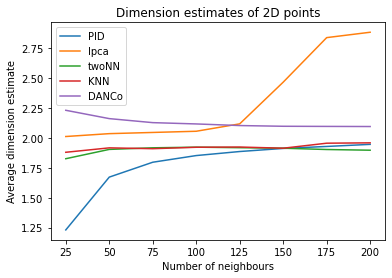}%
  }
  \caption{
    Dimension estimates of the 1D-sphere and the 2D-sphere for different
    methods, plotted as a function of the number of neighours~$k$.
  }
\end{figure}

\cref{fig:dimension_estimates_diff_k_1D} and
\cref{fig:dimension_estimates_diff_k_2D} show average dimension estimate
scores of all investigated methods for varying values of~$k$, both for
points on the $1$-sphere and the $2$-sphere.
We note that on average, only \texttt{twoNN} and \texttt{DANCo} lead to
results which are comparable with the reliability of our method.
However, as we already saw in \cref{fig:dim_estimates_1D} and
\cref{fig:dim_estimates_2D}, the variance of the scores of our method is
significantly lower, leading to more reliable outputs for each of the
points.

\subsection{Euclidicity is More Expressive than Single-Parameter Approaches}\label{sec:Euclidicity versus single-parameter}

\begin{figure}[tbp]
  \centering
  \subcaptionbox{One-parameter approach\label{app:sfig:Wedged spheres Geometric Anomaly Score}}{%
    \includegraphics[width=0.20\linewidth]{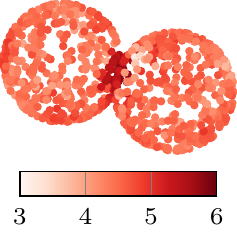}%
  }
  \quad
  \subcaptionbox{\emph{Euclidicity}\label{app:sfig:Wedged spheres Euclidicity}}{%
    \includegraphics[width=0.20\linewidth]{Figures/Wedged_spheres_Euclidicity}%
  }
  \caption{%
    In 2D, \emph{Euclidicity}~\subref{app:sfig:Wedged spheres
    Euclidicity} results in a clearly-delineated singular region~(also
    shown in the main paper), when
    compared to a single-parameter score~\subref{app:sfig:Wedged spheres
    Geometric Anomaly Score}.
  }
  \label{fig:Euclidicity versus single-parameter}
\end{figure}

\begin{figure}[tbp]
  \centering
  \subcaptionbox{}{%
    \includegraphics[height=4cm]{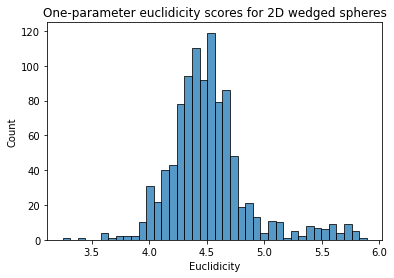}
  }%
  \subcaptionbox{}{%
    \includegraphics[height=4cm]{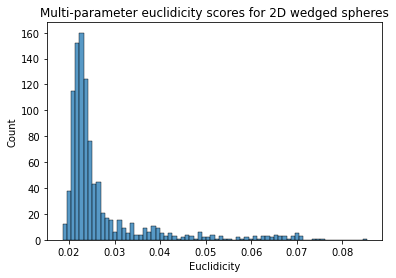}
  }
  \caption{%
    A comparison of Euclidicity values of a one-parameter
    approach~(left) and our multi-parameter approach~(right)
    demonstrates that multiple scales are necessary to adequately
    capture singularities.
  }
  \label{fig:app:One-parameter versus multi-parameter}
\end{figure}

\cref{fig:Euclidicity versus single-parameter} depicts a comparison
between Euclidicity, an inherently multi-parametric approach, and
a single-parameter approach, which assumes the \emph{same} locality
properties to hold for all points. This method is similar to the one
described by \citet{Stolz20a}. We observe that both methods are capable
of detecting the isolated singularity of $S^2 \vee S^2$, with
Euclidicity leading to a score that more clearly delineates the region
around the singularity. Notice that this behaviour will become even
more crucial in real-world data, which may exhibit strong sparsity
patterns and varying density.

Further underscoring these findings, \cref{fig:app:One-parameter versus
multi-parameter} shows the empirical distributions of Euclidicity scores
for fixed locality parameters~(left) and for our proposed multi-scale
locality approach~(right). We see that the variance is
\emph{significantly lower} in the multi-scale regime, indicating more
stable and robust results. Moreover, the ratio of maximum and mean is
higher in the multi-parameter setting, where high Euclidicity scores
correspond to data points that lie close to the singularity, resulting
in more reliable outcomes.

\subsection{Euclidicity of \data{MNIST} and \data{FashionMNIST}}\label{app:emp_dist}

\cref{fig:app:MNIST} and \cref{fig:app:FashionMNIST} show the Euclidicity
results for the 4 additional runs on both the \data{MNIST} and
\data{FashionMNIST} data sets.
Again, we depicted the 9 images with
lowest~(left), medium~(middle), and highest~(right) Euclidicity scores
for the two datasets. Moving from left to right, the images exhibit
increases in the complexity of the local geometry, giving evidence for
the reproducibility of the observation we remarked in
\cref{subsec:MNISTandFashionMNIST}.

\begin{figure}
  \centering
  \subcaptionbox{}{%
    \begin{tabular}{@{}l}
      \includegraphics[width=0.15\textwidth]{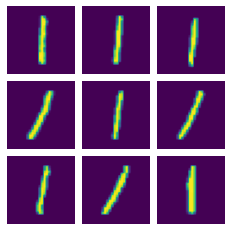}\\ 
      \includegraphics[width=0.15\textwidth]{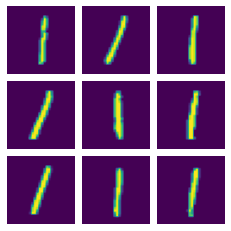}\\ 
      \includegraphics[width=0.15\textwidth]{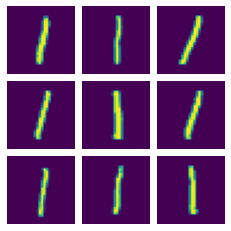}\\
      \includegraphics[width=0.15\textwidth]{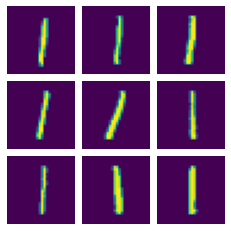}
    \end{tabular}
  }
  \subcaptionbox{}{%
    \begin{tabular}{@{}l}
      \includegraphics[width=0.15\textwidth]{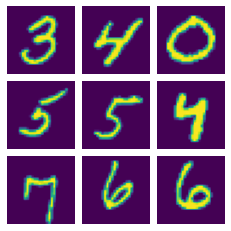}\\
      \includegraphics[width=0.15\textwidth]{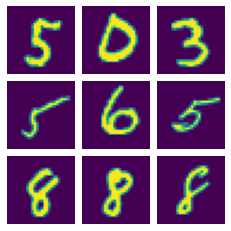}\\
      \includegraphics[width=0.15\textwidth]{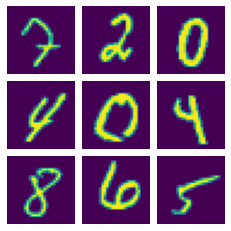}\\
      \includegraphics[width=0.15\textwidth]{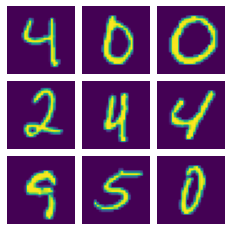}
    \end{tabular}
  }
  \subcaptionbox{}{%
    \begin{tabular}{@{}l}
      \includegraphics[width=0.15\textwidth]{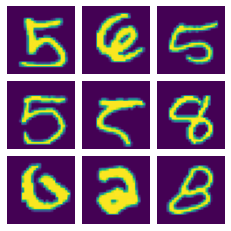}\\
      \includegraphics[width=0.15\textwidth]{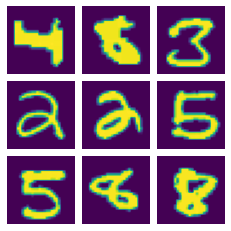}\\
      \includegraphics[width=0.15\textwidth]{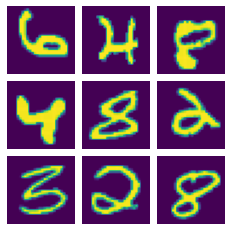}\\
      \includegraphics[width=0.15\textwidth]{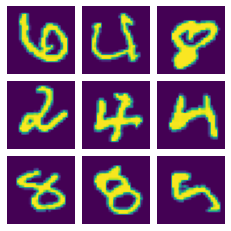}
    \end{tabular}
  }
  \caption{%
    From left to right: more examples of low Euclidicity values, median
    Euclidicity values, and high Euclidicity values for the \data{MNIST}
    data set.
  }
  \label{fig:app:MNIST}
\end{figure}

\begin{figure}
  \centering
  \subcaptionbox{}{%
    \begin{tabular}{@{}l}
      \includegraphics[width=0.15\textwidth]{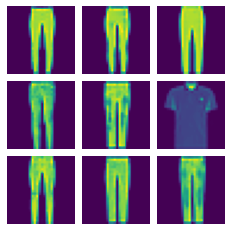}\\ 
      \includegraphics[width=0.15\textwidth]{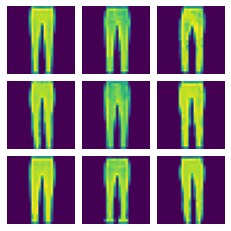}\\ 
      \includegraphics[width=0.15\textwidth]{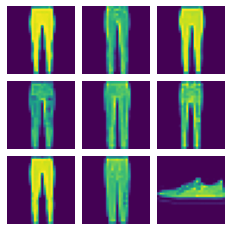}\\
      \includegraphics[width=0.15\textwidth]{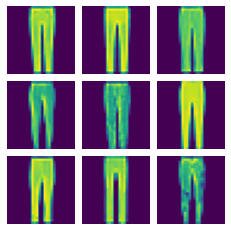}
    \end{tabular}
  }
  \subcaptionbox{}{%
    \begin{tabular}{@{}l}
      \includegraphics[width=0.15\textwidth]{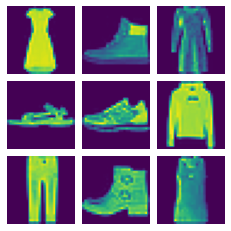}\\
      \includegraphics[width=0.15\textwidth]{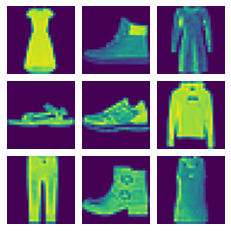}\\
      \includegraphics[width=0.15\textwidth]{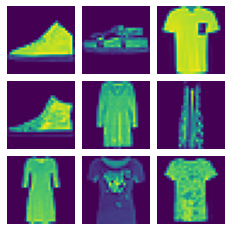}\\
      \includegraphics[width=0.15\textwidth]{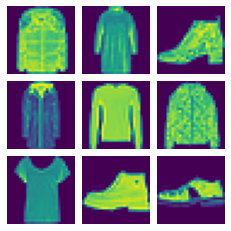}
    \end{tabular}
  }
  \subcaptionbox{}{%
    \begin{tabular}{@{}l}
      \includegraphics[width=0.15\textwidth]{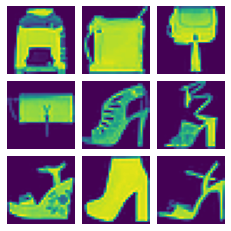}\\
      \includegraphics[width=0.15\textwidth]{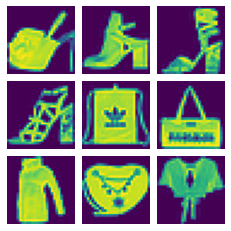}\\
      \includegraphics[width=0.15\textwidth]{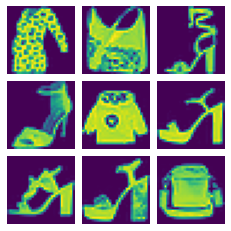}\\
      \includegraphics[width=0.15\textwidth]{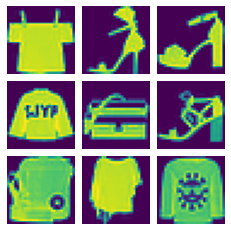}
    \end{tabular}
  }
  \caption{%
    From left to right: more examples of low Euclidicity values, median
    Euclidicity values, and high Euclidicity values for the \data{FashionMNIST}
    data set.
  }
  \label{fig:app:FashionMNIST}
\end{figure}

\clearpage

Moreover, as \cref{fig:app:MNIST versus FashionMNIST} shows, the
empirical distributions of the calculated Euclidicity scores differ
significantly for the \data{MNIST} and \data{FashionMNIST} data sets,
with the distribution for \data{MNIST} exhibiting a bimodal behaviour,
whereas the \data{FashionMNIST} Euclidicity value distribution is
unimodal. We hypothesise that this corresponds to regions of simple
complexity---and locally linear structures---in the \data{MNIST} data
set, which are absent in the \data{FashionMNIST} data set. 
Finally, we observe that Euclidicity values for \data{MNIST} are 
surprisingly stable with respect to specific choices of the intrinsic dimension,
as well as the nearest neighbour parameter $k$, which can be seen in
\cref{fig:MNIST_different_dimensions,fig:MNIST_different_k}. Here, we 
calculated Euclidicity scores for \data{MNIST} for intrinsic dimensions
of $5,10$ and $15$, respectively, and $k \in \{25,50,75\}$. Although the
scale of locality for which Euclidicity is computed varies significantly 
for different $k$, the respective distributions of Euclidicity values
appear to be almost indistinguishable. We surmise that this is
a consequence of the multi-scale nature of our approach.

\begin{figure}[tbp]
  \centering
  \subcaptionbox{MNIST Euclidicity scores for different choices of the intrinsic dimension.\label{sfig:MNIST_different_dimensions}}{
    \includegraphics[width=0.300\linewidth]{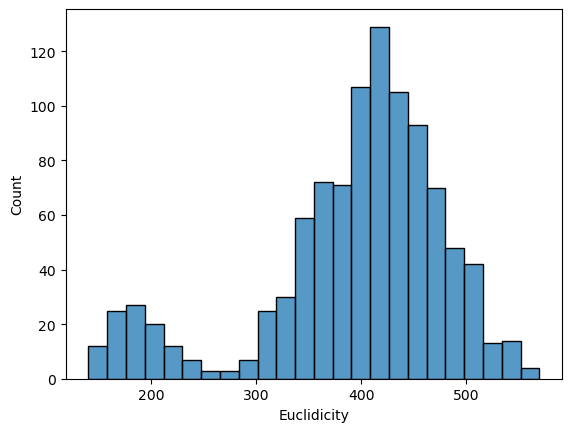}%
    \includegraphics[width=0.300\linewidth]{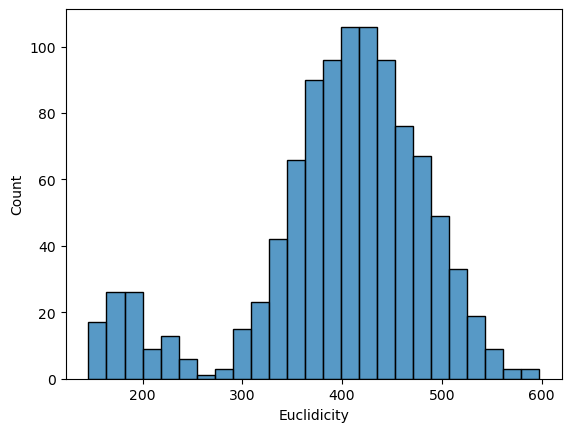}%
    \includegraphics[width=0.300\linewidth]{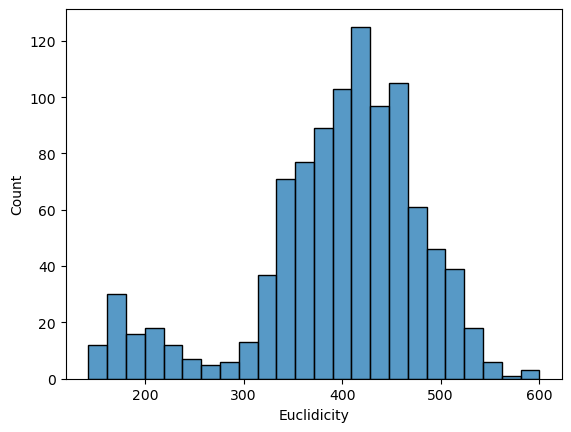}
  }
  \caption{%
    Left to right: distribution of MNIST Euclidicity scores for 
    intrinsic dimension of $5$, $10$, and $15$, respectively. As
    a consequence of the stability of our method, the distributions
    remain similar.
  }
  \label{fig:MNIST_different_dimensions}
\end{figure}

\begin{figure}[tbp]
  \centering
  \subcaptionbox{MNIST Euclidicity scores for different choices of the nearest neighbour parameter $k$.\label{sfig:MNIST_different_k}}{
    \includegraphics[width=0.300\linewidth]{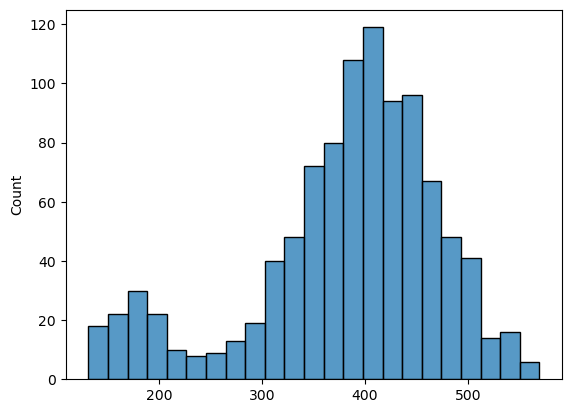}%
    \includegraphics[width=0.300\linewidth]{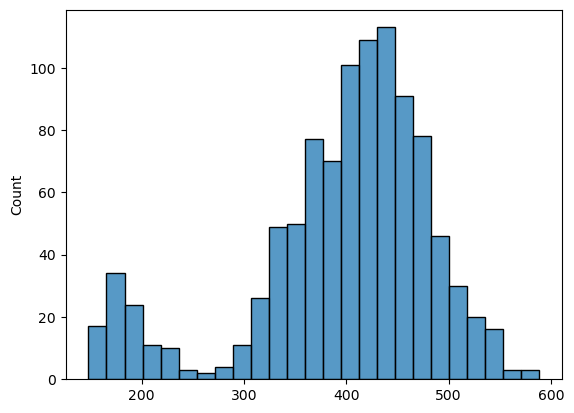}%
    \includegraphics[width=0.300\linewidth]{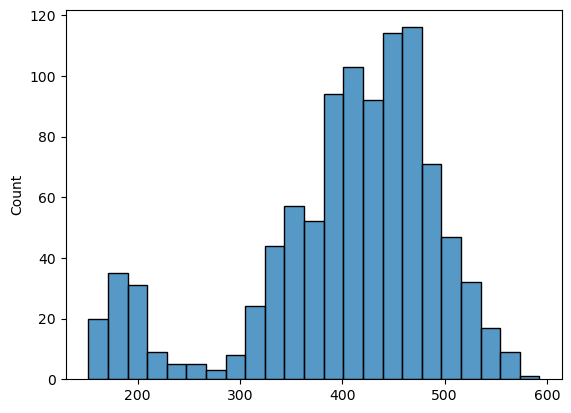}
  }
  \caption{%
    Left to right: distribution of MNIST Euclidicity scores for $k \in
    \{25,50,75\}$, respectively. As a consequence of the stability of
    our method, the distributions remain similar.
  }
  \label{fig:MNIST_different_k}
\end{figure}

\paragraph{Euclidicity and anomaly scores.}
\cref{fig:app:umap} shows a comparison of Euclidicity scores and the
scores of \texttt{IsolationForest}, an anomaly detection algorithm,
for a UMAP projection of \data{MNIST}. Here, we
used the implementation for \texttt{IsolationForest} from the
\texttt{scikit-learn} Python
package.\footnote{\url{https://scikit-learn.org/stable/}}
The clusters correspond to the different classes of \data{MNIST}, see
\cref{fig:app:umap_classes}. Although the original purpose of
Euclidicity is the detection of non-Euclidean neighbourhoods of points,
we observe a correlation of low Euclidicity scores~(i.e.\ points that are
closer to admitting a Euclidean neighbourhood) and high
\texttt{IsolationForest} scores (i.e.\ points that are less likely to be
considered anomalies). This provides further evidence that Euclidicity
could be expected to serve as a powerful anomaly detection method, on
top of its ability to capture local geometric complexity. 
However, in our experiments we observe a~(linear) correlation of only
$0.5$--$0.7$, indicating that Euclidicity detects
\emph{different} anomalous behaviour than just plain outliers.
We plan on investigating this in future work.

\begin{figure}[tbp]
  \centering
  \subcaptionbox{UMAP with class labels\label{fig:app:umap_classes}}{%
    \includegraphics[width=0.33\textwidth]{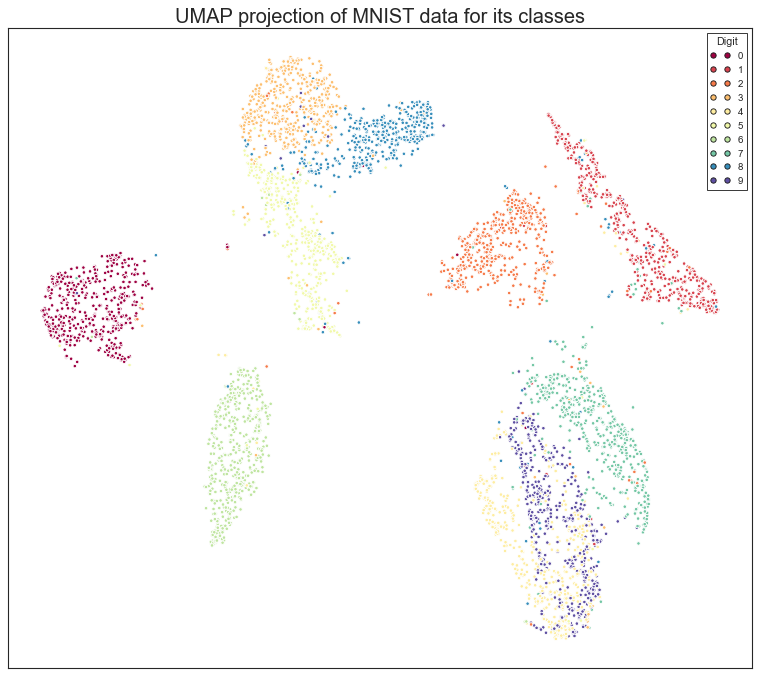}%
  }%
  \subcaptionbox{UMAP Euclidicity scores\label{fig:umap_eucl}}{%
    \includegraphics[width=0.33\textwidth]{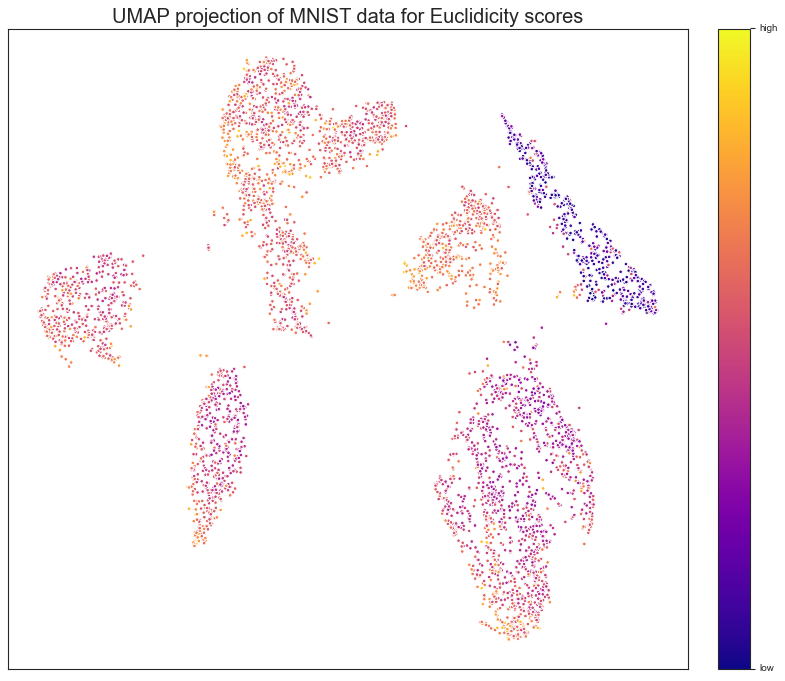}%
  }%
  \subcaptionbox{UMAP Isolation Forest scores\label{fig:umap_iso}}{%
    \includegraphics[width=0.33\textwidth]{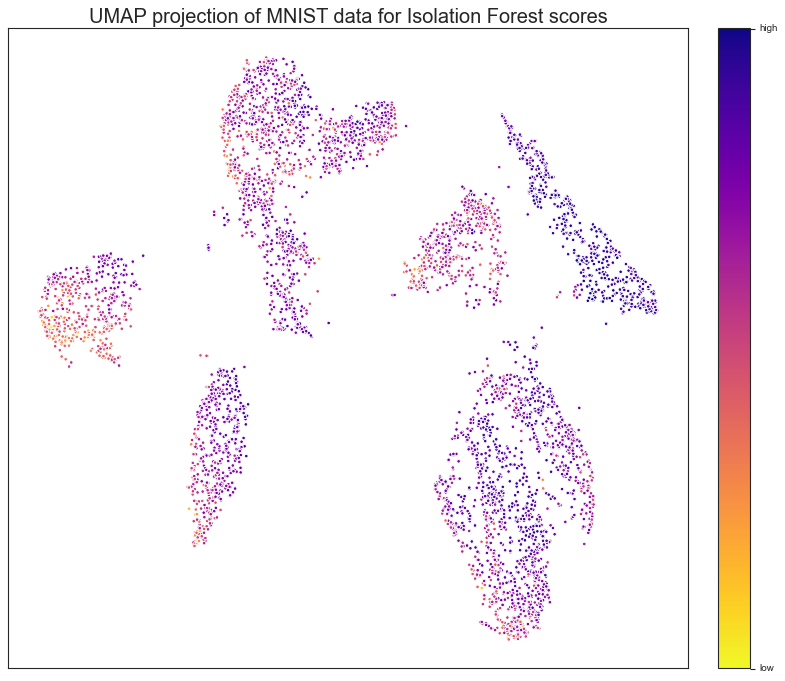}%
  }%
  \caption{
    Euclidicity correlates with Isolation Forest scores for \data{MNIST}
    samples. Notice that we switched the colour maps because \emph{low}
    Isolation Forest scores indicate anomalies.
  }
  \label{fig:app:umap}
\end{figure}

\paragraph{Details on neural network misclassification analysis.}
%
The neural network that we trained for the analysis in
\cref{subsec:MNISTandFashionMNIST} consists of an input layer of $784$
nodes, one dense hidden layer of $5$ nodes and an output layer possessing
$10$ nodes, respectively. The same architecture was used for both
\data{MNIST} and \data{FashionMNIST}, resulting in $3985$ trainable
parameters. Accuracy scores are $0.777$ for \data{MNIST}, and $0.714$ for
\data{FashionMNIST}.
\ul{Neither these scores nor the accuracy itself are
supposed to be competitive; we merely use this network as an example
to outline how Euclidicity could be potentially used to inform network
training and highlight potentially problematic regions in data}.
\cref{fig:Euclidicity misclassified samples} shows the respective results of
Euclidicity scores for misclassified versus correctly-classified samples.
To compare the statistical significance of the respective means for
correctly and misclassified samples, we performed Welch’s t-test, as the
samples are approximately normally distributed. We obtained p-values of
\num{2.8E-5} in the case of \data{MNIST}, and \num{5.7E-3} in the
case of \data{FashionMNIST}. With respect to the common rejection
thresholds in the literature, this leads to a clear
rejection of the null hypothesis, which assumes the population means to
be equal. However, we only claim a \emph{correlation} between the likelihood
for samples to be correctly classified and their corresponding
Euclidicity scores. A statement in terms of \emph{causality} is to be
investigated in future work.

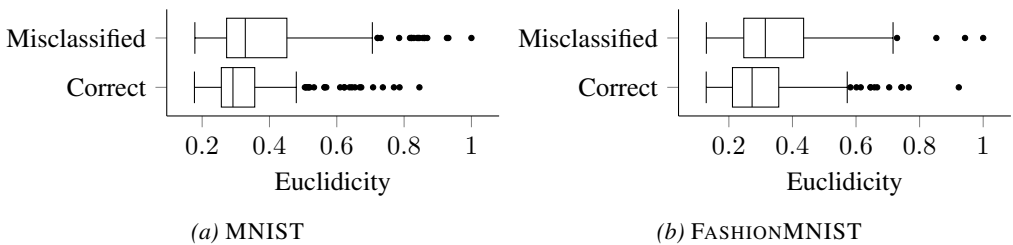
\begin{figure}[tbp]
  \centering
  \pgfplotsset{%
    boxplot/draw direction = x,
  }%
  \subcaptionbox{\data{MNIST}}{%
    \begin{tikzpicture}
      \begin{axis}[%
        axis x line*     = bottom,
        axis y line*     = none,
        ytick            = {1, 2},
        yticklabels      = {Correct, Misclassified},
        height           = 3.0cm,
        width            = 6.0cm,
        mark size        = 1pt,
        tick align       = outside,
        xlabel           = {Euclidicity},
        xtick            = {0.2, 0.4, 0.6, 0.8, 1.0},
        ylabel           = {},
        xticklabel style = {%
          /pgf/number format/fixed,
          /pgf/number format/precision=2,
        },%
      ]%
        \addplot[boxplot] table[y index = 0] {Data/mnist_correct_euclidicity.txt};
        \addplot[boxplot] table[y index = 0] {Data/mnist_errors_euclidicity.txt};
      \end{axis}%
    \end{tikzpicture}
  }
  \subcaptionbox{\data{FashionMNIST}}{%
    \begin{tikzpicture}
      \begin{axis}[%
        axis x line*     = bottom,
        axis y line*     = none,
        ytick            = {1, 2},
        yticklabels      = {Correct, Misclassified},
        height           = 3.0cm,
        width            = 6.0cm,
        mark size        = 1pt,
        tick align       = outside,
        xlabel           = {Euclidicity},
        ylabel           = {},
        xtick            = {0.2, 0.4, 0.6, 0.8, 1.0},
        xticklabel style = {%
          /pgf/number format/fixed,
          /pgf/number format/precision=2,
        },
      ]
        \addplot[boxplot] table[y index = 0] {Data/fmnist_correct_euclidicity.txt};
        \addplot[boxplot] table[y index = 0] {Data/fmnist_errors_euclidicity.txt};
      \end{axis}
    \end{tikzpicture}%
  }%
  \caption{%
    A comparison of Euclidicity scores for misclassified and correctly
    classified samples in two image data sets.
  }%
  \label{fig:Euclidicity misclassified samples}
\end{figure}

\begin{figure}
  \centering
  \subcaptionbox{\data{MNIST}}{%
    \includegraphics[height=4cm]{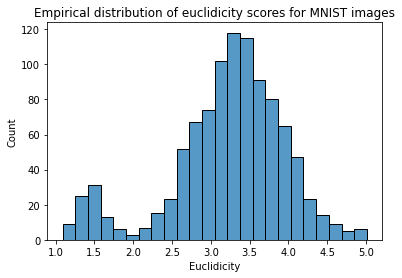}%
  }%
  \subcaptionbox{\data{FashionMNIST}}{%
    \includegraphics[height=4cm]{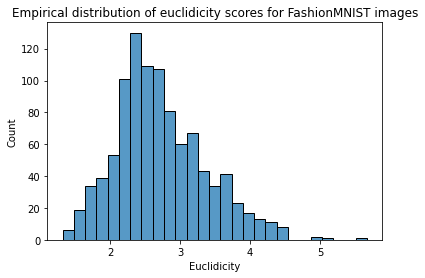}%
  }%
  \caption{%
    Both \data{MNIST} and \data{FashionMNIST} exhibit markedly different
    distributions in terms of Euclidicity: \data{MNIST} Euclidicity
    values are bimodal, whereas \data{FashionMNIST} Euclidicity values
    are unimodal.
  }
  \label{fig:app:MNIST versus FashionMNIST}
\end{figure}

\subsection{Euclidicity Captures Geometric Complexity in Histology Data}

Finally, we applied Euclidicity to a benchmark histology data
set.\footnote{\url{https://github.com/basveeling/pcam}}
The $96 \times 96$ pixel images were first scaled down to $60 \times 60$
pixels, to speed up the computation. Moreover, since the images
are inherently coloured, we transformed them to greyscale images in
order to naturally obtain a high-dimensional representation in Euclidean
space, by flattening them. We then randomly picked $1000$ of the training
images and calculated Euclidicity scores of these samples~(with respect
to the whole space of training images).
The intrinsic dimension was chosen to be $35$, as suggested by
\texttt{twoNN}. All the other hyperparameters were chosen as in the
setting of \cref{app:emp_dist}. The results of images admitting different
scores are depicted in \cref{fig:histology data sets examples}.
Similarly to the observations in \cref{app:emp_dist}, we see very
different structural behaviour inside of the images for different
Euclidicity values: while low scores correspond to images that admit
large voids, medium-score images correspond to images with finer local
geometric structure. Images of high Euclidicity scores, however, seem to
admit \emph{both} large voids \emph{and} regions of fine-grained geometric structure.
We therefore hypothesise the potential utility of Euclidicity for
biological data, for instance to ensure data quality of~(histology or
biomedical) image datasets.

\begin{figure}[tbp]
  \centering
  \includegraphics[width=0.200\linewidth]{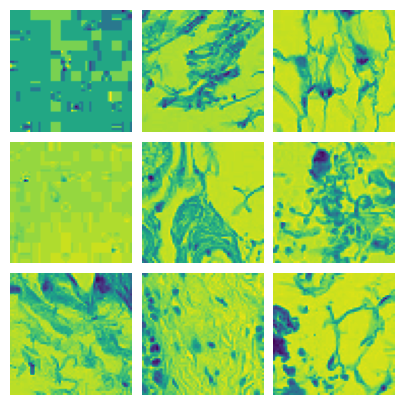}%
  \includegraphics[width=0.200\linewidth]{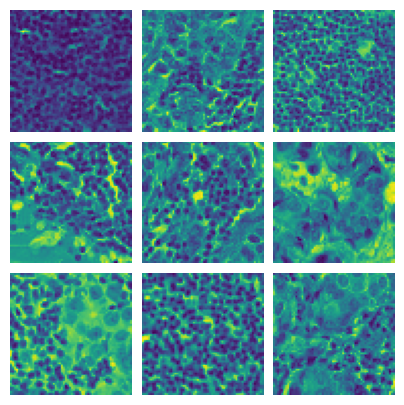}%
  \includegraphics[width=0.200\linewidth]{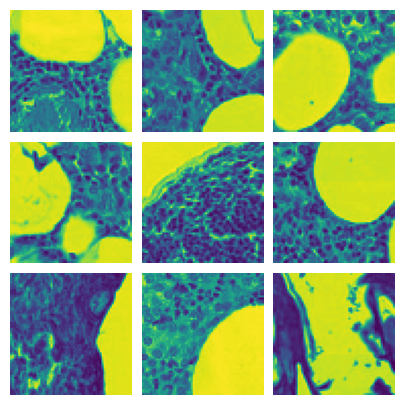}
  \caption{%
    Left to right: sample histology images exhibiting low, median, and
    high Euclidicity, respectively. All samples are taken from the
    `PatchCamelyon' benchmark data set.
  }
  \label{fig:histology data sets examples}
\end{figure}

\end{document}